\setlist{nolistsep}
\newcounter{ALC@tempcntr}
\newcommand{\LCOMMENT}[1]{
    \setcounter{ALC@tempcntr}{\arabic{ALC@rem}}
    \setcounter{ALC@rem}{1}
    \item \{#1\}
    \setcounter{ALC@rem}{\arabic{ALC@tempcntr}}
}%
\newtheorem{lemma}{Lemma}
\newcommand{\hide}[1]{}
\newcommand{\tensor}[1]{\underline{\mathbf{#1}}}
\def\blambda{\mbox{\boldmath ${\lambda}$}}
\newcommand{\method}{\textsc{Scoup-Smt}\xspace}
\newcommand{\methodplain}{Scoup-SMT\xspace}
\newcommand{\merge}{\textsc{Merge}\xspace}
\newcommand{\facebook}{\textsc{Facebook}\xspace}
\newcommand{\brain}{\textsc{BrainQ}\xspace}
\newcommand{\ben}{\begin{enumerate*}}
\newcommand{\een}{\end{enumerate*}}
\newcommand{\bit}{\begin{itemize*}}
\newcommand{\eit}{\end{itemize*}}
\begin{document}


\title{\methodplain: Scalable Coupled Sparse Matrix-Tensor Factorization}

\numberofauthors{4}

\author{
\alignauthor
Evangelos E. Papalexakis\\
       \affaddr{Carnegie Mellon University}\\
       \email{epapalex@cs.cmu.edu}
\and
\alignauthor
Tom M. Mitchell\\
       \affaddr{Carnegie Mellon University}\\
       \email{tom.mitchell@cmu.edu}
\and
\alignauthor
Nicholas D. Sidiropoulos\\
       \affaddr{University of Minnesota}\\
       \email{nikos@ece.umn.edu}
\and
\alignauthor
Christos Faloutsos\\
       \affaddr{Carnegie Mellon University}\\
       \email{christos@cs.cmu.edu}
\and
\alignauthor
Partha Pratim Talukdar \\
	\affaddr{Carnegie Mellon University}\\
	\email{partha.talukdar@cs.cmu.edu}
	\and
\alignauthor
Brian Murphy \\
	\affaddr{Carnegie Mellon University}\\
	\email{brianmurphy@cmu.edu}
}

\toappear{}

\maketitle

\setlength{\floatsep}{0.1cm}
\setlength{\textfloatsep}{0.1cm}
\setlength{\intextsep}{0.1cm}
\setlength{\dblfloatsep}{0.1cm}
\setlength{\abovedisplayskip}{0.1cm}
\setlength{\belowdisplayskip}{0.1cm}

\begin{abstract}
How can we correlate neural activity in the human brain 
as it responds to words, 
with behavioral data expressed as answers to questions about 
these same words?   
In short, we want to find latent variables, that explain
both the brain activity, as well as the behavioral responses.
We show that this is 
an instance of the \emph{Coupled Matrix-Tensor Factorization} (CMTF) problem.
We propose \method, a novel, fast, and parallel algorithm 
that solves the CMTF problem 
and produces a \emph{sparse} latent low-rank subspace of the data. 
In our experiments, we find that \method is {\em 50-100 times }
faster than a state-of-the-art algorithm for CMTF, 
along with a {\em 5 fold} increase in sparsity. 
Moreover, we extend \method to handle missing data 
without degradation of performance.

We apply \method to \brain, 
a dataset consisting of a (nouns, brain voxels, human subjects) tensor 
and a (nouns, properties) matrix, 
with coupling along the nouns dimension. 
\method is able to find meaningful latent variables,
as well as to predict brain activity with competitive accuracy.
Finally, we demonstrate the generality of \method, 
by applying it on a \facebook dataset (users, 'friends', wall-postings);
there, \method spots spammer-like anomalies.

\end{abstract}



\vspace{-0.3cm}

\keywords{Tensor Decompositions, Coupled Matrix-Tensor Factorization, Sparsity, Parallel Algorithm, Brain Activity Analysis }

\section{Introduction}
\label{sec:intro}
How is knowledge mapped and stored in the human brain? How is it expressed by people answering simple questions about specific words? If we have data from both worlds, are we able to combine them and jointly analyze them? In a very different scenario, suppose we have the social network graph of an online social network, and we also have additional information about how and when users interacted with each other. What is a comprehensive way to combine those two pieces of data? Both, seemingly different, problems may be viewed as instances of what is called  \emph{Coupled Matrix-Tensor Factorization} (CMTF), where a data tensor and matrices that hold additional information are jointly decomposed into a set of low-rank factors.

In this work, we introduce \method, a fast, scalable, and sparsity promoting CMTF algorithm.
Our main contributions are the following:
\begin{itemize}[noitemsep]
	\item \emph{Fast, parallel  \& sparsity promoting algorithm:} We provide a novel, scalable, and sparsity inducing algorithm, \method,  that jointly decomposes coupled matrix-tensor data. Figure \ref{fig:crown} shows the accuracy of \method (compared to the traditional algorithm), as a function of portion of the wall-clock time that our algorithm took, again compared to the traditional one. The result indicates a speedup of about 50-100 times, while maintaining very good accuracy.\footnote{Accuracy or relative cost is defined in Section \ref{sec:exp} as the ratio of the squared approximation error of \method, divided by that of the traditional ALS algorithm.}
	\item \emph{Robustness to missing data:} We carefully derive an improved version of the above algorithm which is resilient to missing data and performs well, even with a large portion of the entries missing.
	\item \emph{Effectiveness \& Knowledge Discovery:} We analayze \brain, a brain scan dataset which is coupled to a semantic matrix (see Sec. \ref{sec:discovery} for details).
The brain scan part of the dataset consists of fMRI scans first used in \cite{mitchell2008predicting}, a work that first demonstrated that brain activity can be predictably analyzed into component semantic features. Here, we demonstrate a disciplined way to combine both datasets and carry out a variety of data mining/machine learning tasks, through this joint analysis.


\item \emph{Generality}:  We illustrate the generality of our approach, by applying \method to a completely different setting of a time-evolving social network with side information on user interactions, demonstrating \method's ability to discover anomalies.

\end{itemize}

\begin{figure}[!htf]
	\begin{center}
		\includegraphics[width = 0.45\textwidth]{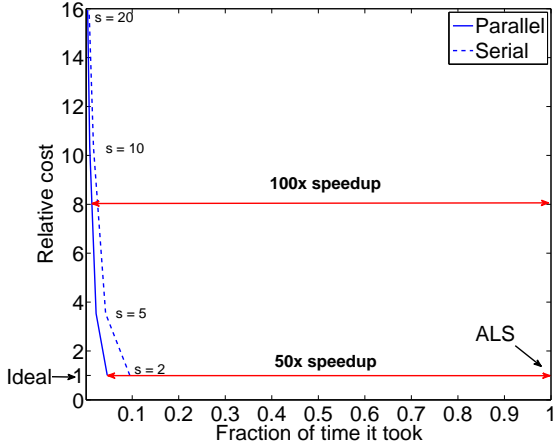}
	\end{center}
	\caption{The relative cost of \method (with respect to the ALS algorithm) as a function of the fraction of the wall-clock time of ALS that the computation required, vividly demonstrates the gains of \method in terms of speedup. In particular, for the entire \brain dataset which is very dense (see Sec. \ref{sec:discovery}), the speedup incurred by the parallel version of \method on 4 cores, was in the range of 50-100 times. This Figure also shows the behavior of \method with respect to the sampling parameter $s$. As $s$ increases, \method runs faster but the relative cost increases as well.}
	\label{fig:crown}
\end{figure}

\hide{
  \begin{minipage}[b]{0.5\textwidth}
  \begin{minipage}[b]{0.5\textwidth}
    	\includegraphics[width = \textwidth]{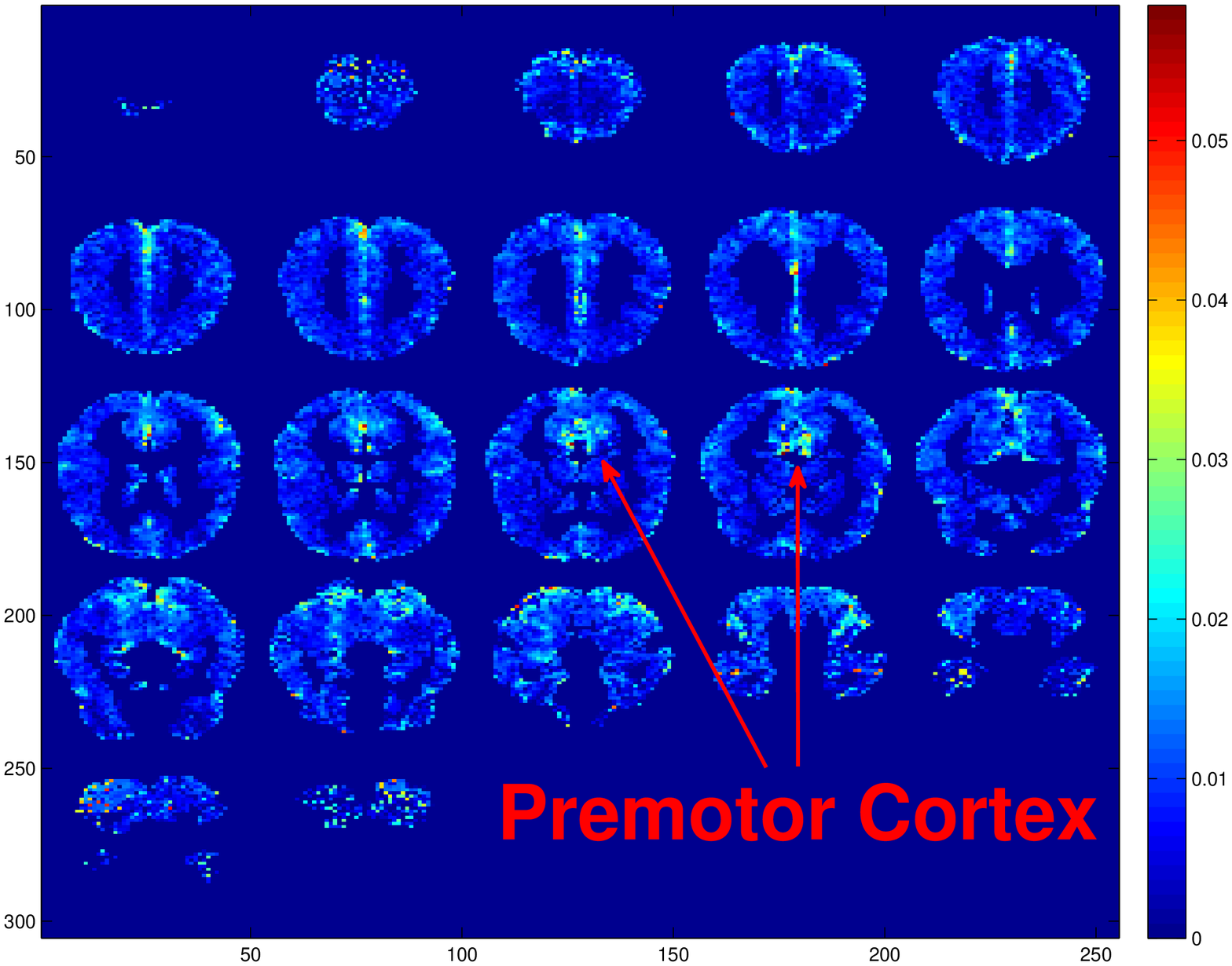}
  \end{minipage}
  \begin{minipage}[b]{0.45\textwidth}
    \hspace{-0.1in}
    \centering
    \scriptsize
  \begin{tabular}{lll}
  \vspace{-1.4in}
  \\
     &\textbf{Nouns}  \\ 
	\multirow{4}{*}{ } &glass, tomato	, bell	\\ 
	     & \textbf{Questions} \\ 
	\multirow{4}{*}{ } 	& can you pick it up?	\\
		& 	can you hold it in one hand?\\
		&	is it smaller than a golfball?'\\ 
  \end{tabular}
    \end{minipage}
    \captionof{figure}{blah}
    \label{fig:crown_2}
  \end{minipage}
 }

\section{Preliminaries}
\label{sec:prelim}
 \begin{table}[!h]
  \begin{center}
{
  \begin{tabular}{lll}
    & \textbf{Symbol} & \textbf{Description} \\
    & CMTF & Coupled Matrix-Tensor Factorization \\
    & ALS & Alternating Least Squares \\
    & $x, \mathbf{x}, \mathbf{X}, \tensor{X}$ & scalar, vector, matrix, tensor (respectively)  \\ 
    &$\mathbf{A\odot B}$& Khatri-rao product (see \cite{kolda2009tensor}). \\
    & $\mathbf{A\ast B}$ & Hadamard (elementwise) product. \\
    &$\mathbf{A}^\dag$ 	&	Pseudoinverse of $\mathbf{A}$ (see Sec. \ref{sec:prelim})\\
    & $\| \mathbf{A} \|_F$ & Frobenius norm of $\mathbf{A}$.  \\
    &$\mathbf{a\circ b\circ c}$& $\left(\mathbf{a\circ b\circ c}\right)(i,j,k) = \mathbf{a}(i)\mathbf{b}(j)\mathbf{c}(k)$\\
    &$(i)$ as superscript & Indicates the $i$-th iteration\\
    &$\mathbf{A}_1^{i}$, $\mathbf{a}_1^{i}$& series of matrices or vectors, indexed by $i$.\\
    & $\tensor{X}_{(i)}$& $i$-th mode unfolding of tensor $\tensor{X}$ (see \cite{kiers2000towards}).\\
    &$\mathcal{I}$&	Set of indices.\\
    &$\mathbf{x}(\mathcal{I})$ & Spanning indices $\mathcal{I}$ of $\mathbf{x}$.	\\

  \end{tabular}
  }
  \end{center}
  \caption{Table of symbols}
  \label{tab:sybmols}
\end{table}

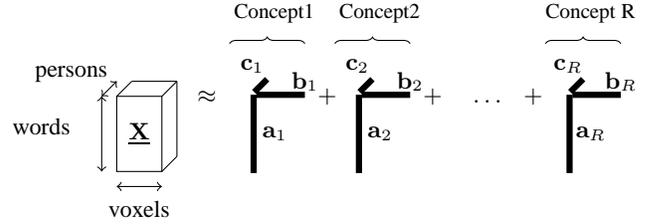
\begin{figure}[htbp]
\begin{center}

	\begin{tikzpicture}[scale=0.2]
	
	       \foreach \x in { (0,0) }
	       {
	           \draw \x rectangle +(3,-5);
		   \draw \x -- +(1,1);
		   \draw \x+(3,0) -- +(3+1,1);
		   \draw \x+(3,-5) -- +(3+1,-5+1);
		   \draw \x+(1,1) -- +(3+1,1+0);
		   \draw \x+(3+1,1) -- +(3+1,-5+1+0);
	       };
	       \node at (1.5,-2.5) {$\tensor{X}$};
	       \node at (6,0) {$\approx$};
	       \draw [<->] (0,-6) -- +(3,0);
	       \node (users) at (1.5,-7.5) {voxels};
	
	       \draw [<->] (-1,0) -- +(0,-5);
	       \node (urls) at (-5,-2) {words};
	
	       \draw [<->] (-1,0) -- +(1,1);
	       \node (date) at (-3,1.5) {persons};

	       \foreach \ind/\sub/\step/\offset in {1/1/7/2, 2/2/7/2}{
		   \node (root) at (\ind*\step+\offset,0) {};
	           \draw[black,very thick] (root) rectangle +(0.2,-5);
	           \draw[black,very thick](root) +(0.4,0) rectangle +(0.4+3,0.2);
	           \draw[black,very thick, rotate around={45:(root)+(0.4,0)}]
		       (root)+(0.4,0) rectangle +(0.4+1,0.2);
	           \node at (\ind*\step+\offset+1.5,-2.5)     {$\mathbf{a}_{\sub}$};
	           \node at (\ind*\step+\offset+3.5,1)     {$\mathbf{b}_{\sub}$};
	           \node at (\ind*\step+\offset+0,2)     {$\mathbf{c}_{\sub}$};
	           \node (plus \ind) at (\ind*\step+\offset+ 5, 0)     {$+$};
	           \draw [snake=brace] (\ind *\step + \offset -1.5,3.5) -- +(5,0);
	           \node at (\ind * \step + \offset + 1.5, 5.5) {\small Concept\sub };
	       };
	
	       \foreach \ind/\sub/\step/\offset in {4/R/7/2}{
		   \node (root) at (\ind*\step+\offset,0) {};
	           \draw[black,very thick] (root) rectangle +(0.2,-5);
	           \draw[black,very thick](root) +(0.4,0) rectangle +(0.4+3,0.2);
	           \draw[black,very thick, rotate around={45:(root)+(0.4,0)}]
		       (root)+(0.4,0) rectangle +(0.4+1,0.2);
	           \node at (\ind*\step+\offset+1.5,-2.5)     {$\mathbf{a}_{\sub}$};
	           \node at (\ind*\step+\offset+3.5,1)     {$\mathbf{b}_{\sub}$};
	           \node at (\ind*\step+\offset+0,2)     {$\mathbf{c}_{\sub}$};
	           \draw [snake=brace] (\ind *\step + \offset -1.5,3.5) -- +(5,0);
	           \node at (\ind * \step + \offset + 1.5, 5.5) {\small Concept \sub };
	       };
	
	       \node [right of=plus 2] {$\ldots ~ ~~ +$};

	\end{tikzpicture} 
    \vspace{-1em}
    \caption{\small PARAFAC decomposition of a three-way tensor of a brain activity tensor
    as sum of $F$ outer products (rank-one tensors),
    reminiscing of the rank-$F$ singular value decomposition of a matrix. Each component corresponds to a {\bf latent} concept of, e.g. "insects", "tools" and so on, a set of brain regions that are most active for that particular set of words, as well as groups of persons. 
    \label{fig:decomp} }
\end{center}
\end{figure}

\subsection{Introduction to Tensors}
Matrices record dyadic properties, like ``people recommending products''. Tensors are
the $n$-mode generalizations, capturing 3- and higher-way relationships. For example
 ``subject-verb-object'' relationships, such as the ones recorded by the Read the Web - NELL project \cite{RTW} (and have been recently used in this context \cite{kang2012gigatensor} \cite{papalexakis2012parcube}) naturally lead to a 3-mode tensor. In this work, our working example of a tensor has three modes. The first mode contains a number of nouns; the second mode corresponds to the brain activity, as recorded by an fMRI machine; and the third mode identifies the human subject corresponding to a particular brain activity measurement.

Earlier \cite{papalexakis2012parcube}
we introduced a scalable and parallelizable tensor decomposition 
which uses mode sampling. In this work, we focus on a more general and expressive framework, that of \emph{Coupled Matrix-Tensor Factorizations}.


\subsection{Coupled Matrix-Tensor Factorization}
Oftentimes, two tensors, or a matrix and a tensor, may have one mode in common;
consider the example that we mentioned earlier, where we have a word by brain activity by human subject tensor, we also have a semantic matrix that provides additional information for the same set of words.
In this case,
we say that the matrix and the tensor are {\em coupled} in the 'subjects' mode.

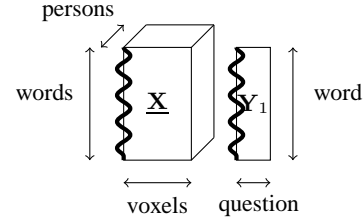
\begin{figure}[!htp]

	\begin{center}
	    \begin{tikzpicture}[scale=0.3]

       \foreach \x in { (0,0) }
       {
           \draw \x rectangle +(3,-5);
           \draw [snake=snake, ultra thick] \x -- +(0,-5); 
           \draw \x -- +(1,1);
           \draw \x+(3,0) -- +(3+1,1);
           \draw \x+(3,-5) -- +(3+1,-5+1);
           \draw \x+(1,1) -- +(3+1,1+0);
           \draw \x+(3+1,1) -- +(3+1,-5+1+0);
       };

       \node at (1.5,-2.5) {$\tensor{X}$};
       \draw [<->] (0,-6) -- +(3,0);
       \node (users) at (1.5,-7) {voxels};

       \draw [<->] (-1.5,0) -- +(0,-5);
       \node (urls) at (-3.5,-2) {words};

       \draw [<->] (-1,0) -- +(1,1);
       \node (date) at (-2,1.5) {persons};

       \draw (5,0) rectangle (6.5,-5);
       \draw [snake=snake, ultra thick] (5,0) -- +(0,-5); 
       \node (Ytensor) at (5.75,-2.5) {$\mathbf{Y}_1$};

       \draw [<->] (5,-6) -- +(1.5,0);
       \node (attrs) at (6,-7) {question};

       \draw [<->] (7.5,0) -- +(0,-5);
       \node (urls2) at (9.5,-2) {word};

\end{tikzpicture}
	
	\caption{\small {\em Coupled Matrix - Tensor} example:
            Tensors often share one or more modes
            (with thick, wavy line):
			$\tensor{X}$ is the brain activity tensor and $\mathbf{Y}$ is the semantic matrix. As the wavy line indicates, these two
			datasets are coupled in the 'word' dimension.
	\label{fig:multiset} }
	\end{center}
\vspace{-0.1in}
\end{figure}

In this work we focus on three mode tensors, however, everything we mention extends directly  to higher modes. In the general case, a three mode tensor $\tensor{X}$ may be coupled with at most three matrices $\mathbf{Y}_i,~i=1\cdots3$, in the manner illustrated in Figure \ref{fig:multiset} for one mode. The optimization function that encodes this decomposition is:
\begin{align}
\label{eq:main_obj}
	&\displaystyle{ \min_{\mathbf{A,B,C,D,E,G}} \| \tensor{X} - \sum_k \mathbf{a}_k \circ \mathbf{b}_k\circ \mathbf{c}_k\|_F^2  ~+  }  \\
	&\displaystyle{\|  \mathbf{Y}_1 - \mathbf{AD}^T\|_F^2     +  \| \mathbf{Y}_2 - \mathbf{BE}^T\|_F^2  + \| \mathbf{Y}_3 - \mathbf{CG}^T\|_F^2}  \nonumber
\end{align}
where $\mathbf{a}_k$ is the $k$-th column of $\mathbf{A}$. The idea behind the coupled matrix-tensor decomposition is that we seek to jointly analyze $\tensor{X}$ and $\mathbf{Y}_i$, decomposing them to latent factors who are coupled in the shared dimension. For instance, the first mode of $\tensor{X}$ shares the same low rank column subspace as $\mathbf{Y}_1$; this is expressed through the latent factor matrix $\mathbf{A}$ which jointly provides a basis for that subspace.

\subsection{The Alternating Least Squares Algorithm}
One of the most popular algorithms to solve PARAFAC (as introduced in Figure \ref{fig:decomp}) is the so-called Alternating Least Squares (ALS); the basic idea is that by fixing two of the three factor matrices, we have a least squares problem for the third, and we thus do so iteratively, alternating between the matrices we fix and the one we optimize for, until the algorithm converges, usually when the relative change in the objective function between two iterations is very small.

Solving CMTF using ALS follows the same strategy, only now, we have up to three additional matrices in our objective. For instance, when fixing all matrices but $\mathbf{A}$, the update for $\mathbf{A}$ requires to solve the following least squares problem:
\[
	\min_{\mathbf{A}} \|  \tensor{X}_{(1)} - \mathbf{(B\odot C)A}^T\|_F^2 + \|\mathbf{Y}_1 - \mathbf{DA}^T\|_F^2
\]

In Algorithm \ref{alg:als}, we provide a detailed outline of the ALS algorithm for CMTF.

In order to obtain initial estimates for matrices $\mathbf{A,B,C}$ we take the PARAFAC decomposition of $\tensor{X}$.
As for matrix $\mathbf{D}$ (and similarly for the rest), it suffices to solve a simple Least Squares problem, given the PARAFAC estimate of $\mathbf{A}$,
we initialize as $\mathbf{D} = \mathbf{Y}_1 \left(\mathbf{A}^{\dag}\right)^T$, where $\dag$ denotes the Moore-Penrose Pseudoinverse which, given the Singular Value Decomposition of a matrix
$\mathbf{X} = \mathbf{U\Sigma V}^T,$
 is computed as 
$ \mathbf{X}^\dag = \mathbf{V \Sigma^{-1} U}^T.$

 \begin{algorithm}[!ht]
\caption{Alternating Least Squares Algorithm for CMTF}
 \label{alg:als}
\begin{algorithmic}[1]
 \REQUIRE $\tensor{X}$ of size $I\times J \times K$, matrices $\mathbf{Y}_i,~i=1\cdots 3$, of size $I\times I_2$, $J\times J_2$, and $K\times K_2$ respectively, number of factors $F$.
 \ENSURE $\mathbf{A}$ of size $I \times F$, $\mathbf{b}$ of size $J\times F$, $\mathbf{c}$ of size $K\times F$,
 $ \mathbf{D} $ of size $I_2\times F$, $\mathbf{G}$ of size $J_2 \times F$, $\mathbf{E}$ of size $K_2\times F$.
 \STATE Unfold $\tensor{X}$ into $\tensor{X}_{(1)}$, $\tensor{X}_{(2)}$, $\tensor{X}_{(3)}$ (see \cite{kiers2000towards}).
 \STATE Initialize $\mathbf{A}$, $\mathbf{B}$, $\mathbf{C}$ using PARAFAC of $\tensor{X}$. Initialize $\mathbf{D,G,E}$ as discussed on the text.
 \WHILE{convergence criterion is not met}

  \STATE $ \mathbf{A} =   \begin{bmatrix}\tensor{X}_{(1)}  \\ \mathbf{Y}_1\end{bmatrix}^T \left(\begin{bmatrix} \mathbf{(B\odot C)} \\ \mathbf{D} \end{bmatrix}^\dag\right)^T  $

  \STATE  $ \mathbf{B} =  \begin{bmatrix} \tensor{X}_{(2)}  \\ \mathbf{Y}_2\end{bmatrix}^T \left(\begin{bmatrix} \mathbf{(C\odot A)} \\ \mathbf{G}  \end{bmatrix}^\dag\right)^T  $

  \STATE $ \mathbf{C} =  \begin{bmatrix} \tensor{X}_{(3)} \\\mathbf{Y}_3 \end{bmatrix}^T  \left(\begin{bmatrix} \mathbf{(A\odot B)} \\ \mathbf{E} \end{bmatrix}^\dag\right)^T$

  \STATE $\mathbf{D} = \mathbf{Y}_1 \left(\mathbf{A}^{\dag}\right)^T$,~ $\mathbf{G} = \mathbf{Y}_2 \left(\mathbf{B}^{\dag}\right)^T$,~ $\mathbf{E} = \mathbf{Y}_3 \left(\mathbf{C}^{\dag}\right)^T$

 \ENDWHILE
\end{algorithmic}
\end{algorithm}

Besides ALS, there exist other algorithms for CMTF. For example, \cite{acar2011all} uses a first order optimization algorithm for the same objective. However, we chose to operate using ALS because it is the `workhorse' algorithm for plain tensor decomposition, and it easily to incorporate additional constraints in ALS. Nevertheless, one strength of \method is that it can be used as-is with any underlying core CMTF implementation.

\section{Proposed Method}
\label{sec:method}
\hide{
\begin{itemize}[noitemsep]
	\item \emph{Fast}: \method is able to operate fast, on large datasets and provide efficient solutions very close to the exact ones. We design an algorithm which is scalable and is able to operate on big datasets, regardless of sparsity. Moreover, we carefully derive update formulas for the core ALS algorithm, which, in turn, contribute to a faster outcome.
	\item \emph{Simple}: The algorithm that outlines \method (Algorithm \ref{alg:repetition}) is concise, intuitive and relatively simple to implement.
	\item \emph{Sparse}: The factors produced by \method are \emph{sparse}, leading to easier and more intuitive interpretation of the results, as well as efficient storage thereof.
	\item \emph{Parallelizable}: \method, as outline by Algorithm \ref{alg:repetition} is readily parallelizable, as we will discuss further on.
\end{itemize}
}
\begin{figure*}[htp]
\begin{align}
\tiny
\label{eq:dens_tensor}
&\mathbf{x}_{A}(i) =\displaystyle{\sum_{j=1}^J \sum_{k=1}^K  |\tensor{X}(i,j,k)}|  + \displaystyle{ \sum_{j=1}^{I_1}| \mathbf{Y}_1(i,j) |},~
\mathbf{x}_{B}(j) =\displaystyle{\sum_{i=1}^I \sum_{k=1}^K |\tensor{X}(i,j,k)}|  + \displaystyle{ \sum_{i=1}^{I_2}| \mathbf{Y}_2(j,i) | },~
\mathbf{x}_{C}(k) =\displaystyle{\sum_{i=1}^I \sum_{j=1}^J |\tensor{X}(i,j,k)}| + \displaystyle{ \sum_{j=1}^{I_3}| \mathbf{Y}_3(k,j) | },~\\ \label{eq:dens_mat_I}
&~~~~~~~~~~~~~~~~~~~~~~~~~~~~~~~~~~~~~~~~~~~\mathbf{y}_{1,A} (i)= \displaystyle{ \sum_{j=1}^{I_1}| \mathbf{Y}_1(i,j) | }~ \mathbf{y}_{2,B}(j) =\displaystyle{ \sum_{i=1}^{I_2}| \mathbf{Y}_2(j,i) | }, ~\mathbf{y}_{3,C}(k) = \displaystyle{ \sum_{j=1}^{I_3}| \mathbf{Y}_3(k,j) | } \\~ \label{eq:dens_mat_J}
&~~~~~~~~~~~~~~~~~~~~~~~~~~~~~~~~~~~~~~~~~~~\mathbf{y}_{1,D} (j)= \displaystyle{ \sum_{i=1}^{I}| \mathbf{Y}_1(i,j) | },~ \mathbf{y}_{2,G}(i) =\displaystyle{ \sum_{j=1}^{J}| \mathbf{Y}_2(j,i) | }, ~\mathbf{y}_{3,E}(i) = \displaystyle{ \sum_{k=1}^{K}| \mathbf{Y}_3(k,i) | }
\end{align}
 \end{figure*}

\subsection{Algorithm description}
\hide{
The key ideas of the algorithm may be summarized in the following list:
\begin{enumerate}[noitemsep]
	\item Sample the data in order to get a representative, much smaller, sample of the original matrix-tensor couple. It may be possible (and preferable) to draw more than one samples of the data.
	\item Fit the CMTF model to the samples drawn from the previous step.
	\item Merge the results, while making sure that the intermediate results from the previous step are combined correctly.
\end{enumerate}
}
There are three main concepts behind \method (outlined in Algorithm \ref{alg:repetition}):
\begin{enumerate}[noitemsep]
	\item[{\bf Phase 1}] Sample the data in order to reduce the dimensionality
	\item[{\bf Phase 2}]  fit CMTF to the reduced data (possibly on more than one samples)
	\item[{\bf Phase 3}] merge the partial results
\end{enumerate}

\noindent{\bf Phase1: Sampling}
An efficient way to reduce the size of the dataset, yet operate on a representative subset thereof is to use \emph{biased} sampling.
In particular, given a three-mode tensor $\tensor{X}$ we sample as follows. We calculate three vectors as shown in equation (\ref{eq:dens_tensor}), one for each mode of $\tensor{X}$. These vectors, which we henceforth refer to as \emph{density vectors} are the marginal absolute sums with respect to all but one of the modes of the tensor, and in essence represent the importance of each index of the respective mode. We then sample \emph{indices} of each mode according to the respective density vector. For instance, assume an $I \times J \times K$ tensor; suppose that we need a sample of size $\frac{I}{s}$ of the indices of the first mode. Then, we just define
\[
p_\mathcal{I}(i) = \mathbf{x}_A (i) / \displaystyle{\sum_{i=1}^I \mathbf{x}_A (i)}
\]
as the probability of sampling the $i$-th index of the first mode, and we simply sample without replacement from the set $\{1\cdots I\}$, using $p_\mathcal{I}$ as bias. The very same idea is used for matrices $\mathbf{Y}_i$.
Doing so is preferable over sampling uniformly, since our bias makes it more probable that high density indices of the data will be retained on the sample, and hence, it will be more representative of the entire set.

Suppose that we call $\mathcal{I,J,K}$ the index samples for the three modes of $\tensor{X}$. Then, we may take $\tensor{X}_s = \tensor{X}(\mathcal{I,J,K})$ (and similarly for matrices $\mathbf{Y}_i$); essentially, what we are left with is a small, yet representative, sample of our original dataset, where the high density blocks are more likely to appear on the sample. It is important to note that the indices of the coupled modes are the same for the matrix and the tensor, e.g. $\mathbf{I}$ randomly selects the same set of indices for $\tensor{X}$ and $\mathbf{Y}_1$. This way, we make sure that the coupling is \emph{preserved} after sampling.

\noindent{\bf Phase 2: Fit CMTF to reduced data}
Having said that, the key idea of our proposed algorithm is to run ALS CMTF (Algorithm \ref{alg:als}) on the sample and then, based on the sampled indices, redistribute the result to the original index space. In more detail, suppose that $\mathbf{A}_s$ is the factor matrix obtained by the aforementioned procedure, and that jointly describes the first mode of $\tensor{X}_s$ and $\mathbf{Y}_{1,s}$. The dimensions of $\mathbf{A}_s$ are going to be $|\mathcal{I}| \times F$ (where $|\dot|$ denotes cardinality and $F$ is the number of factors). Let us further assume matrix $\mathbf{A}$ of size $I \times F$ which expresses the first mode of the tensor and the matrix, before sampling; due to sampling, it holds that $I \gg |\mathcal{I}|$. If we initially set all entries of $\mathbf{A}$ to zero and we further set
$
	\mathbf{A}(\mathcal{I},:) = \mathbf{A}_s
$
we obtain a highly \emph{sparse} factor matrix whose non-zero values are a 'best effort' approximation of the true ones, i.e. the values of the factor matrix that we would obtain by decomposing the full data.

So far, we have provided a description of the algorithm where only one repetition of sampling is used. However, if our sample consists of only a small portion of the data, inevitably, this will not be adequate in order to successfully model all variation in the data. To that end, we allow for multiple sampling repetitions in our algorithm, i.e. extracting multiple sample tensors $\tensor{X}_s$ and side matrices $\mathbf{Y}_{i,s}$, fitting a CMTF model to all of them and combining the results in a way that the true latent patterns are retained. We are going to provide a detailed outline of how to carry the multi-repetition version of \method in the following.


While doing multiple repetitions, we keep a \emph{common}. subset of indices for all different samples. In particular, let $p$ be the percentage of common values across all repetitions and $\mathcal{I}_p$ denote the common indices along the first mode (same notation applies to the rest of the indices); then, all sample tensors $\tensor{X}_s$ will definitely contain the indices $\mathcal{I}_p$ on the first mode, as well as $(1-p)\frac{I}{s}$ indices sampled independently (across repetitions) at random. This common index sample is key in order to ensure that our results are not rank deficient, and all partial results are merged correctly.

We do not provide an exact method for choosing $p$, however, as a rule of thumb, we observed that, depending on how sparse and noisy the data is, a range of $p$ between 0.2 and 0.5 works well. This introduces a trade-off between redundancy of indices that we sample, versus the accuracy of the decomposition; since we are not dealing solely with tensors, which are known to be well behaved in terms of decomposition uniqueness, it pays off to introduce some data redundancy (especially when \method runs in a parallel system) so that we avoid rank-deficiency in our data.

Let $r$ be the number of different sampling repetitions, resulting in $r$ different sets of sampled matrix-tensor couples $\tensor{X}_s^{(i)}$ and $\mathbf{Y}_{j,s}^{(i)}$ ($i=1\cdots r, ~j = 1\cdots 3$). For that set of coupled data, we fit a CMTF model, using Algorithm \ref{alg:als}, obtaining a set of factor matrices $\mathbf{A}^{(i)}$ (and likewise for the rest).

\noindent{\bf Phase 3: Merging partial results}
After having obtained these $r$ different sets of partial results, as a final step, we have to merge them together into a set of factor matrices that we would ideally get had we operated on the full dataset.

In order to make the merging work, we first introduce the following scaling on each column of each factor matrix: Let's take $\mathbf{A}^{(i)}$ for example; we normalize each column of $\mathbf{A}$ by the $\ell_2$ norm of the common part, as described in line 8 of Algorithm \ref{alg:repetition}. By doing so, the common part of each factor matrix (for all repetitions) will be unit norm. This scaling is absorbed in a set of scaling vectors $\blambda_A$ (and accordingly for the rest of the factors). The new objective function is shown in Equation \ref{eq:scaling}

\begin{align}
\label{eq:scaling}
	&\displaystyle{ \min_{\mathbf{A,B,C,D,E,G}} \| \tensor{X} - \sum_k \blambda_A(k) \blambda_B(k) \blambda_C(k)\mathbf{a}_k \circ \mathbf{b}_k\circ \mathbf{c}_k\|_F^2  ~+  }  \\ \nonumber
	&\displaystyle{\|  \mathbf{Y}_1 - \mathbf{A}~\text{diag}(\blambda_A \ast \blambda_D)~\mathbf{D}^T\|_F^2     +  \| \mathbf{Y}_2 - \mathbf{B}~\text{diag}(\blambda_B\ast \blambda_E)~\mathbf{E}^T\|_F^2  ~}\\ \nonumber
	& + \displaystyle{ \| \mathbf{Y}_3 - \mathbf{C}~\text{diag}(\blambda_C \ast \blambda_G)~\mathbf{G}^T\|_F^2}  \nonumber
\end{align}

A problem that is introduced by carrying out multiple sampling repetitions is that the correspondence of the output factors of each repetition is very likely to be distorted. In other words, say we have matrices $\mathbf{A}^{(1)}$ and $\mathbf{A}^{(2)}$ and we wish to merge their columns (i.e. the latent components) into a single matrix $\mathbf{A}$, by stitching together columns that correspond to the same component. It might very well be the case that the order in which the latent components appear in $\mathbf{A}^{(1)}$ is not the same as in $\mathbf{A}^{(2)}$.

The sole purpose of the aforementioned normalization is to resolve the correspondence problem. In Algorithm \ref{alg:merge}, we merge the partial results  while establishing the correct correspondence of the columns. Theoretical intuition as to why this is possible follows as a proof sketch:

Following the example of $r=2$ of the previous paragraph, according to Algorithm \ref{alg:merge}, we compute the inner product of the common parts of each column of $\mathbf{A}^{(1)}$ and $\mathbf{A}^{(2)}$. Since the common parts of each column are normalized to unit norm, then the inner product of the common part of the column of $\mathbf{A}^{(1)}$ with that of $\mathbf{A}^{(2)}$ will be maximized (and exactly equal to 1) for the matching columns, and by the Cauchy-Schwartz inequality, for all other combinations, it will be less than 1.

\begin{center}
\begin{algorithm} [ht]
\caption{\method: Fast, sparse, and parallel CMTF} \label{alg:repetition}
\begin{algorithmic}[1]
\REQUIRE Tensor $\tensor{X}$ of size $I\times J \times K$, matrices $\mathbf{Y}_i,~i=1\cdots 3$, of size $I\times I_2$, $J\times J_2$, and $K\times K_2$ respectively, number of factors $F$, sampling factor $s$, number of repetitions $r$.
\ENSURE $\mathbf{A}$ of size $I \times F$, $\mathbf{b}$ of size $J\times F$, $\mathbf{c}$ of size $K\times F$,
 $ \mathbf{D} $ of size $I_2\times F$, $\mathbf{G}$ of size $J_2 \times F$, $\mathbf{E}$ of size $K_2\times F$.
 $\blambda_A$, $\blambda_B$, $\blambda_C$, $\blambda_D$, $\blambda_E$, $\blambda_G$ of size $F\times 1$ which contains the scale of each component for each factor matrix.
\STATE Initialize $\mathbf{A,B,C,D,E,G}$ to all-zeros.
\STATE Randomly, \emph{using mode densities as bias}, select a set of $100p\%$ ($p \in [0,1])$ indices $\mathcal{I}_p,\mathcal{J}_p,\mathcal{K}_p$ to be common across all repetitions. For example, $\mathcal{I}_p$ is sampled with probabilities with  $p_\mathcal{I}(i) = \mathbf{x}_A (i) / \displaystyle{\sum_{i=1}^I \mathbf{x}_A (i)}$. Probabilities for the rest of the modes are calculated similarly.
\FOR {$i = 1 \cdots r$}

 \LCOMMENT {{\bf Phase 1: Sample indices}}	
\STATE Compute densities as in equations \ref{eq:dens_tensor}, \ref{eq:dens_mat_I}, \ref{eq:dens_mat_J}.

  Compute set of indices $\mathcal{I}^{(i)}$ as random sample without replacement of $\{1\cdots I\}$ of size $I/\left( s \left(  1-p\right)\right)$ with probability $p_\mathcal{I}(i) = \mathbf{x}_A (i) / \displaystyle{\sum_{i=1}^I \mathbf{x}_a (i)}$.
 Likewise for $\mathcal{J,K},$, $\mathcal{I}_1$, $\mathcal{I}_2$, and $\mathcal{I}_3$.
 Set $\mathcal{I}^{(i)} = \mathcal{I} \cup \mathcal{I}_p$. Likewise for the rest.


  \STATE Get $\tensor{X}^{(i)}_s = \tensor{X}(\mathcal{I}^{(i)},\mathcal{J}^{(i)},\mathcal{K}^{(i)})$, $\mathbf{Y}_{1s}^{(i)} = \mathbf{Y}_1(\mathcal{I}^{(i)},\mathcal{I}_1^{(i)} )$ and likewise for $\mathbf{Y}^{(i)}_{2s}$ and $\mathbf{Y}^{(i)}_{3s}$. Note that the same index sample is used for \emph{coupled} modes.

  \LCOMMENT {{\bf Phase: Fit the model on the sampled data}}
   \STATE  Run Algorithm \ref{alg:als} for $\tensor{X}_s^{(i)}$ and $\mathbf{Y}_{js}^{(i)},~j=1\cdots 3$ and obtain $\mathbf{A}_s,\mathbf{B}_s,\mathbf{C}_s, \mathbf{D}_s, \mathbf{G}_s, \mathbf{E}_s$.
	
	\STATE $\mathbf{A}^{(i)}(\mathcal{I}^{(i)},:) = \mathbf{A}_s$. Likewise for the rest.

	\STATE Calculate the $\ell_2$ norm of the columns of the common part: $\blambda^{(i)}_A(f)= \|\mathbf{A}^{(i)}(\mathcal{I}_p,f)\|_2 $, for $f=1\cdots F$.
	Normalize columns of $\mathbf{A}^{(i)}$ using $\blambda^{(i)}_A$ (likewise for the rest).
	Note that the common part of each factor will now be normalized to unit norm. 
\ENDFOR
\LCOMMENT {{\bf Phase 3: Merge partial results}}
\STATE $\mathbf{A} =$\merge$(\mathbf{A}_1^i)$. Likewise for the rest.
\STATE $\blambda_A = $ average of ${\blambda_A}_1^i$. Likewise for the rest.

\end{algorithmic}
\end{algorithm}
\end{center}

\begin{center}
\begin{algorithm} [ht]
\caption{\merge: Given partial results of factor matrices, merge them correctly} \label{alg:merge}
\begin{algorithmic}[1]
\REQUIRE Factor matrices $\mathbf{A}_1^i$ of size $I\times F$ each, and $r$ is the number of repetitions, $\mathcal{I}_p$: set of common indices.
\ENSURE Factor matrix $\mathbf{A}$ of size $I\times F$.
\STATE Set $\mathbf{A} = \mathbf{A}^{(1)}$
\STATE Set $\ell = \{1\cdots F\}$, a list that keeps track of which columns have not been assigned yet.
\FOR{$i = 2\cdots r$}
		\FOR {$f_1 = 1\cdots F$}
		\FOR{$f_2$ in $\ell$}
			\STATE Compute similarity $\mathbf{v}(f_2) = \left(\mathbf{A}(\mathcal{I}_p,f_2)\right) ^T \left(\mathbf{A}^{(i)}(\mathcal{I}_p,f_1))\right)$
		\ENDFOR
			\STATE $ c^*= \arg \max_{c} \mathbf{v}(c)$ (Ideally, for the matching columns, the inner product should be close to 1; conversely, for the rest of the columns, it should be considerably smaller)
			\STATE $\left. \mathbf{A}(:,c^*) =  \mathbf{A}^{(i)}(:,f_1)\right|_{\mathbf{A}(:,c^*)=0}$, i.e. update the zero entries of the column.
			\STATE Remove $c^*$ from list $\ell$.
		\ENDFOR
\ENDFOR
\end{algorithmic}
\end{algorithm}
\end{center}

\subsection{Speeding up the core of the algorithm}
In addition to our main contribution in terms of speeding up the decomposition, i.e. Algorithm \ref{alg:repetition}, we are able
to further speed the algorithm up, by making a few careful interventions to the core algorithm (Algorithm \ref{alg:als}).

\begin{lemma}
We may do the following simplification to each pseudoinversion step of the ALS algorithm (Algorithm \ref{alg:als}):
\[
	\begin{bmatrix}\mathbf{A\odot B} \\ \mathbf{M}\end{bmatrix}^\dag = \left( \mathbf{A}^T \mathbf{A} \ast \mathbf{B}^T \mathbf{B} + \mathbf{M}^T \ast \mathbf{M} \right)^\dag \begin{bmatrix} \left( \mathbf{A \odot B} \right)^T, \mathbf{M}^T \end{bmatrix}
\]
\end{lemma}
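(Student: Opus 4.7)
The plan is to treat the stacked matrix as a single tall operator and apply a standard pseudoinverse identity. Define
\[
\mathbf{X} = \begin{bmatrix} \mathbf{A}\odot\mathbf{B} \\ \mathbf{M} \end{bmatrix},
\]
so that the left-hand side of the lemma is simply $\mathbf{X}^{\dag}$. I will use the well-known identity $\mathbf{X}^{\dag} = (\mathbf{X}^T \mathbf{X})^{\dag}\, \mathbf{X}^T$, which holds for any real matrix (not only those of full column rank, since $\mathrm{rank}(\mathbf{X}^T\mathbf{X})=\mathrm{rank}(\mathbf{X})$ over the reals and the four Moore--Penrose conditions can be checked directly from the SVD). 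This reduces the problem to computing $\mathbf{X}^T\mathbf{X}$ and reading off $\mathbf{X}^T$ in block form.

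For the block product, observe that
\[
\mathbf{X}^T\mathbf{X} \;=\; (\mathbf{A}\odot\mathbf{B})^T (\mathbf{A}\odot\mathbf{B}) \;+\; \mathbf{M}^T \mathbf{M},
\]
since the cross-terms vanish by block orthogonality of the stacking. The crucial algebraic step is then to invoke the standard Khatri--Rao identity
\[
(\mathbf{A}\odot\mathbf{B})^T (\mathbf{A}\odot\mathbf{B}) \;=\; (\mathbf{A}^T\mathbf{A}) \ast (\mathbf{B}^T\mathbf{B}),
\]
which follows entrywise from the definition of the column-wise Khatri--Rao product as a stack of Kronecker products, combined with the mixed-product property $(\mathbf{a}\otimes\mathbf{b})^T(\mathbf{a}'\otimes\mathbf{b}') = (\mathbf{a}^T\mathbf{a}')(\mathbf{b}^T\mathbf{b}')$. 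Substituting this into the expression for $\mathbf{X}^T\mathbf{X}$ yields the Gram matrix appearing in the lemma (after correcting the evident typo where the second $\ast$ should read as a matrix product $\mathbf{M}^T\mathbf{M}$).

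Finally, since $\mathbf{X}^T = \bigl[(\mathbf{A}\odot\mathbf{B})^T,\; \mathbf{M}^T\bigr]$ by definition of transposition on a vertical block matrix, plugging everything into $\mathbf{X}^{\dag} = (\mathbf{X}^T\mathbf{X})^{\dag}\mathbf{X}^T$ delivers exactly the stated formula. The main conceptual obstacle is justifying the use of the identity $\mathbf{X}^{\dag} = (\mathbf{X}^T\mathbf{X})^{\dag}\mathbf{X}^T$ without a full-column-rank assumption on $\mathbf{A}\odot\mathbf{B}$ stacked with $\mathbf{M}$; this can be handled by a one-line SVD argument verifying the four Moore--Penrose conditions. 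The rest is essentially a direct substitution, and the payoff is computational: instead of pseudoinverting an $(IJ+I_2)\times F$ matrix, one only needs to pseudoinvert the $F\times F$ matrix $\mathbf{A}^T\mathbf{A} \ast \mathbf{B}^T\mathbf{B} + \mathbf{M}^T\mathbf{M}$, which is the key source of speedup in the ALS updates of Algorithm~\ref{alg:als}.
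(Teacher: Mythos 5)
Your proof is correct and takes essentially the same route as the paper: reduce to the stacked matrix, use the Khatri--Rao Gram identity $(\mathbf{A}\odot\mathbf{B})^T(\mathbf{A}\odot\mathbf{B}) = \mathbf{A}^T\mathbf{A}\ast\mathbf{B}^T\mathbf{B}$, and apply the partitioned-matrix pseudoinverse step, the only difference being that you derive that step from the identity $\mathbf{X}^\dag = (\mathbf{X}^T\mathbf{X})^\dag\mathbf{X}^T$ via an SVD argument, whereas the paper simply cites it (along with the Khatri--Rao pseudoinverse formula, which is not strictly needed). Your remark that $\mathbf{M}^T\ast\mathbf{M}$ in the statement should read $\mathbf{M}^T\mathbf{M}$ is also correct; the paper carries the same typo, and dimensional consistency forces the ordinary product.
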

\begin{proof}
For the Moore-Penrose pseudoinverse of the Khatri-Rao product, it holds that \cite{bro1998multi}, \cite{liu2008hadamard}
\[
	\left( \mathbf{A \odot B}  \right)^\dag = \left( \mathbf{A}^T \mathbf{A} \ast \mathbf{B}^T \mathbf{B} \right)^\dag \left( \mathbf{A \odot B} \right)^T
\]
Furthermore \cite{bro1998multi}
\[
	\left(  \mathbf{A \odot B} \right)^T \left(  \mathbf{A \odot B} \right) = \mathbf{A}^T \mathbf{A} \ast  \mathbf{B}^T \mathbf{B}
\]	
For a partitioned matrix $\mathbf{P} = \begin{bmatrix} \mathbf{P}_1 \\ \mathbf{P}_2\end{bmatrix}$, it holds that its pseudoinverse may be written in the following form \cite{hung1975moore}
\[
	\begin{bmatrix} \mathbf{P}_1 \\ \mathbf{P}_2\end{bmatrix}^\dag = \left(  \mathbf{P}_1 ^T \mathbf{P}_1 + \mathbf{P}_2 ^T \mathbf{P}_2  \right)^\dag \begin{bmatrix} \mathbf{P}_1 ^T, & \mathbf{P}_2 ^T\end{bmatrix}
\]
Putting things together, it follows:
\[
	\begin{bmatrix}\mathbf{A\odot B} \\ \mathbf{M}\end{bmatrix}^\dag = \left( \mathbf{A}^T \mathbf{A} \ast \mathbf{B}^T \mathbf{B} + \mathbf{M}^T \ast \mathbf{M} \right)^\dag \begin{bmatrix} \left( \mathbf{A \odot B} \right)^T, \mathbf{M}^T \end{bmatrix}
\]
which concludes the proof.
\end{proof}

The above lemma implies that  substituting the naive pseudoinversion of $\begin{bmatrix}\mathbf{A\odot B} \\ \mathbf{M}\end{bmatrix}$ with the simplified version, offers significant \emph{computational} gains to Algorithm \ref{alg:als} and hence to Algorithm \ref{alg:repetition}. More precisely, if the dimensions of $\mathbf{A,B}$ and $\mathbf{M}$ are $I \times R$, $J \times R$ and $I \times I_2$, then computing the pseudoinverse naively would cost $O\left( R^2 \left(  IJ + I_2\right) \right)$, whereas our proposed method yields a cost of $O\left( R^2 \left( I + J + I_2\right)\right)$ because of the fact that we are pseudoinverting only a \emph{small} $R \times R$ matrix. We have to note here that in almost all practical scenarios $R\ll I, J, I_2$.

\subsection{Accounting for missing values}

In many practical scenarios, we often have corrupted or missing data. For instance, when measuring brain activity, a few sensors might stop working, whereas the majority of the sensors produce useful signal. Despite these common data imperfections, it is important for a data mining algorithm to be able to operate. 



\hide{ 
Traditionally, there are two major ways of doing so. Either we impute the missing values \emph{prior} to the factorization, or we essentially \emph{ignore} the missing values during the factorization process. Attempting to estimate the missing values is a very hard {\em application-specific} task, since it requires domain knowledge of the underlying data, and is usually at least as hard (algorithmically and computationally) as carrying out a factorization. On the other hand, ignoring missing data elements in the factorization process is a cleaner and general way of handling missing values without making any assumptions on the underlying data model, and only requires a few careful modifications of the existing algorithm.
}

We carefully ignore the missing values 
from the entire optimization procedure:
Notice that is \emph{not} the same as simply zeroing out all missing values, 
since 0 might have a valid physical interpretation.
Specifically,
we define a 'weight' tensor $\tensor{W}$ which has '0' in all coefficients where values are missing, and '1' everywhere else. Similarly, we introduce three weight matrices $\mathbf{W}_i$ for each of the coupled matrices $\mathbf{Y}_i$. Then, the optimization function of the CMTF model becomes

\begin{align*}
	&\displaystyle{ \min_{\mathbf{A,B,C,D,E,G}} \| \tensor{W}\ast \left( \tensor{X} - \sum_k \mathbf{a}_k \circ \mathbf{b}_k\circ \mathbf{c}_k \right)\|_F^2  ~+  }  \\
	&\displaystyle{\|  \mathbf{W}_1 \ast \left( \mathbf{Y}_1 - \mathbf{AD}^T \right)\|_F^2   + \| \mathbf{W}_2 \ast\left( \mathbf{Y}_2 - \mathbf{BE}^T \right)\|_F^2  ~+}\\
	&   \displaystyle{  \| \mathbf{W}_3 \ast \left( \mathbf{Y}_3 - \mathbf{CG}^T \right) \|_F^2}
\end{align*}

As we show in Algorithm \ref{alg:als}, we may solve CMTF by solving six least squares problems in an alternating fashion. A fortuitous implication of this fact is that in order to handle missing values for CMTF, it suffices to solve

\begin{equation}
	\min_{\mathbf{B}} \|  \mathbf{W}\ast \left( \mathbf{X} -  \mathbf{AB}^T  \right)\|_F^2
	\label{eq:lsmv}
\end{equation}
where $\mathbf{W}$ is a weight matrix in the same sense as described a few lines earlier.

On our way tackling the above problem, we first need to investigate its scalar case, i.e. the case where we are interested only in $\mathbf{B}(j,f)$ for a fixed pair of $j$ and $f$. The optimization problem may be rewritten as
\[
	\min_{\mathbf{B}(j,f)} \|\mathbf{W}(:,j)\ast\mathbf{X}(:,j) - \left(\mathbf{W}(:,j)\ast \mathbf{A}(:f)\right) \mathbf{B}(j,f)^T  \|
\]
which is essentially a scalar least squares problem of the form:
\[
	\min_{b} \| \mathbf{x} - \mathbf{a}b\|_2^2
\]
with solution in analytical form: $b = \frac{\mathbf{x}^T \mathbf{a}}{ \| \mathbf{a} \|_2^2   }$

We may, thus, solve this problem of Equation \ref{eq:lsmv} using \emph{element-wise coordinate descent}, where we update each coefficient of $\mathbf{B}$ iteratively, until convergence. Therefore, with the aforementioned derivation, we are able to modify our original algorithm in order to take missing values into account.

\subsection{Parallelization}
Our proposed algorithm is, by its nature, parallelizable; in essence, we generate multiple samples of the coupled data, we fit a CMTF model to each sample and then we merge the results. By carefully observing Algorithm \ref{alg:repetition}, we can see that lines 3 to 9 may be carried out entirely in parallel, provided that we have a good enough random number generator that does not generate the very same sample across all $r$ repetitions. In particular, the $r$ repetitions are independent from one another, since computing the set of common indices (line 2), which is the common factor across all repetitions, is done before line 3.

\section{Knowledge Discovery}
\label{sec:discovery}
\subsection{\methodplain on Brain Image Data With Additional Semantic Information}

As part of a larger study of neural representations of word meanings in the human brain \cite{mitchell2008predicting}, we applied \methodplain to a combination of datasets which we henceforth jointly refer to as \brain.  This dataset consists of two parts.  The first is a tensor that contains measurements of the fMRI brain activity of 9 human subjects, when shown each of 60 concrete nouns (5 in each of 12 categories, e.g. dog, hand, house, door, shirt, dresser, butterfly, knife, telephone, saw, lettuce, train).   fMRI measures slow changes in blood oxygenation levels, reflecting localized changes in brain activity.  Here our data is made up of $3\times3\times6$mm voxels (3D pixels) corresponding to fixed spatial locations across participants.  Recorded fMRI values are the mean activity over 4 contiguous seconds, averaged over multiple presentations of each stimulus word (each word is presented 6 times as a stimulus). Further acquisition and preprocessing details are given in \cite{mitchell2008predicting}. This dataset is publicly available\footnote{\url{http://www.cs.cmu.edu/afs/cs/project/theo-73/www/science2008/data.html}}.  The second part of the data is a matrix containing answers to 218 questions pertaining to the semantics of these 60 nouns. A sample of these questions is shown in Table \ref{tab:fmri}.

This dataset has been used before in works such as \cite{murphy2012selecting}, \cite{palatucci2009zero}.

\brain's size is 60$\times77775\times$9  with over 11 million non-zeros (tensor), and $60\times 218$ with about 11.000 non-zeros (matrix). The dimensions might not be extremely high, however, the data is \emph{very dense} and it is therefore difficult to handle efficiently. For instance, decomposing the dataset using the simple ALS algorithm took more than 24 hours, whereas \method yielded a speedup of 50-100$\times$ over this (cf. Figure \ref{fig:crown}).

\noindent{\bf Simultaneous Clustering of Words, Questions and Regions of the Brain}

One of the strengths of our proposed method is its expressiveness in terms of simultaneously soft-clustering
all involved entities of the problem. By taking a low rank decomposition of the \brain data (using $r = 5$ and $s_I = 3,~ s_J= 86, ~s_K=   1$ for the tensor and $s_I$ for the questions dimension of the matrix)\footnote{We may use imbalanced sampling factors, especially when the data is far from being 'rectangular'.}, we are able to find groups that jointly express words, questions and brain voxels (we can also derive groups of human subjects; however, it is an active research subject in neuroscience, whether brain-scans should differ significantly between people, and is out of the scope of the present work).

 In Figure \ref{fig:fmri_results}, we display 4 such groups of brain regions that are activated given a stimulus of a group of similar words; these words can be seen in Table \ref{tab:fmri}, along with groups of similar questions that were highly correlated with the words of each group. Moreover, we were able to successfully identify high activation of the \emph{premotor cortex} in Group 3, which is associated with concepts such as holding or picking items up.

\begin{figure*}[htbf]
	\begin{center}
		\subfigure[Group 1]{\includegraphics[width = 0.245\textwidth]{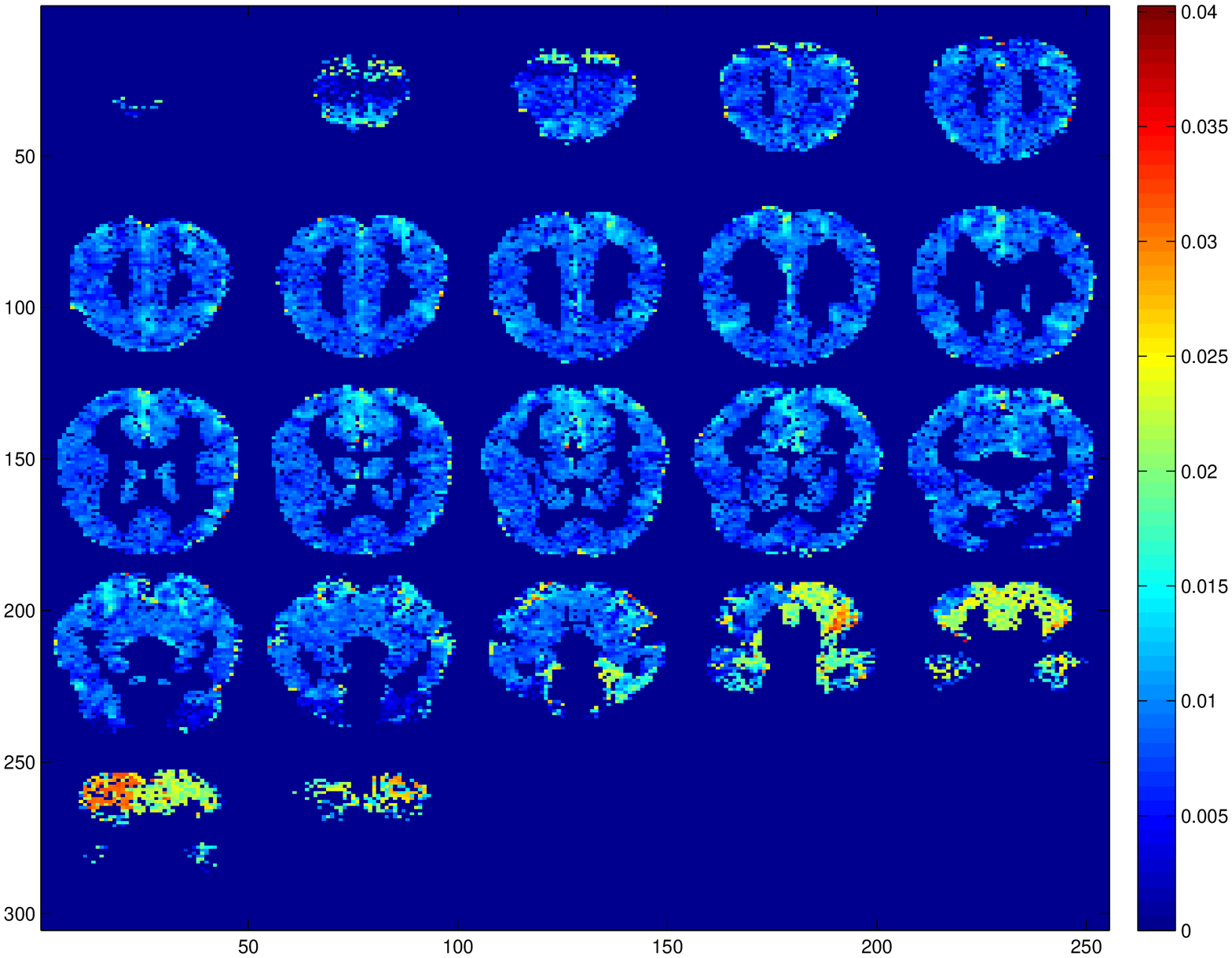}}
		\subfigure[Group 2]{\includegraphics[width = 0.245\textwidth]{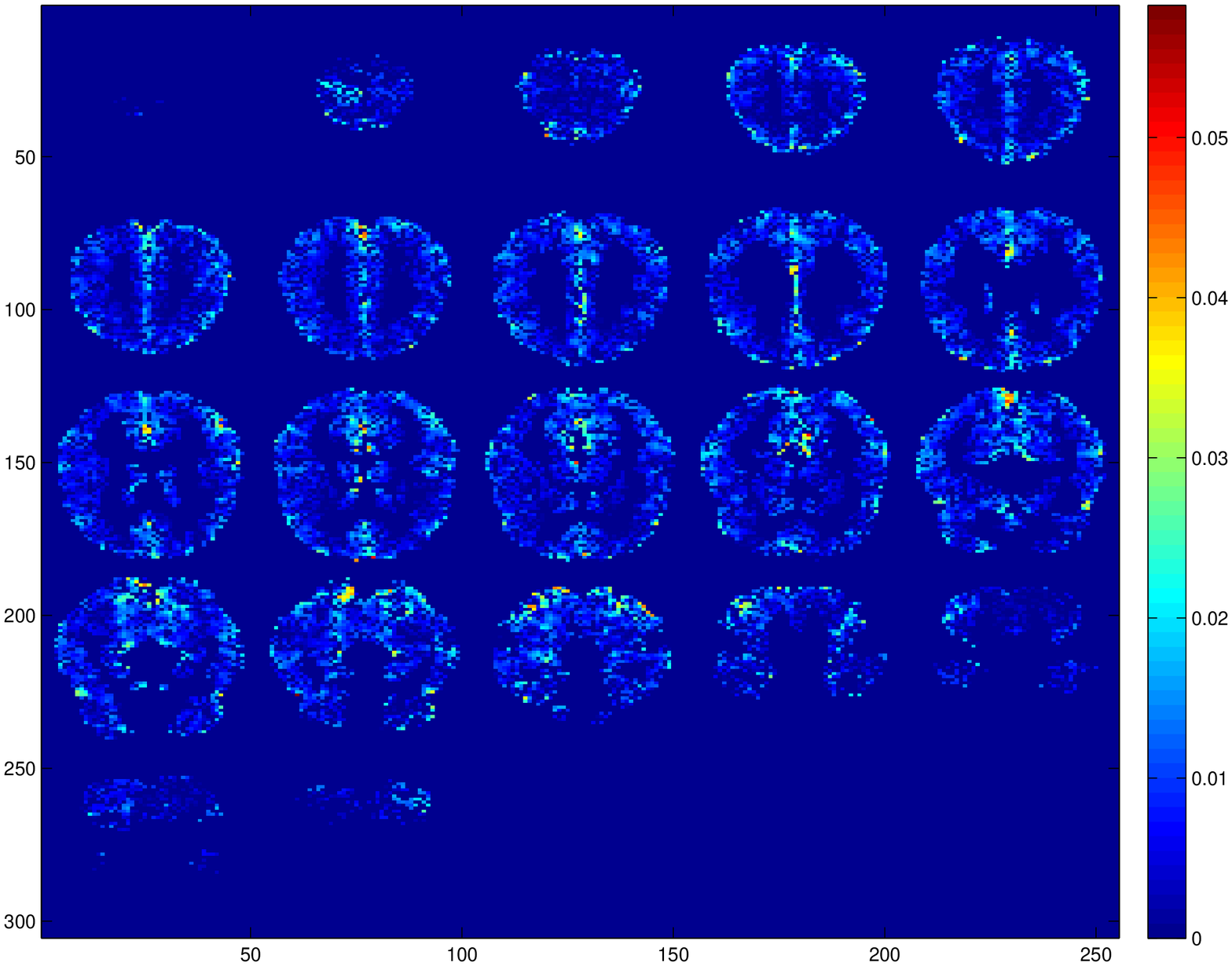}}
		\subfigure[Group 3]{\includegraphics[width = 0.245\textwidth]{FIG/fmri/6.eps}}
		\subfigure[Group 4]{\includegraphics[width = 0.245\textwidth]{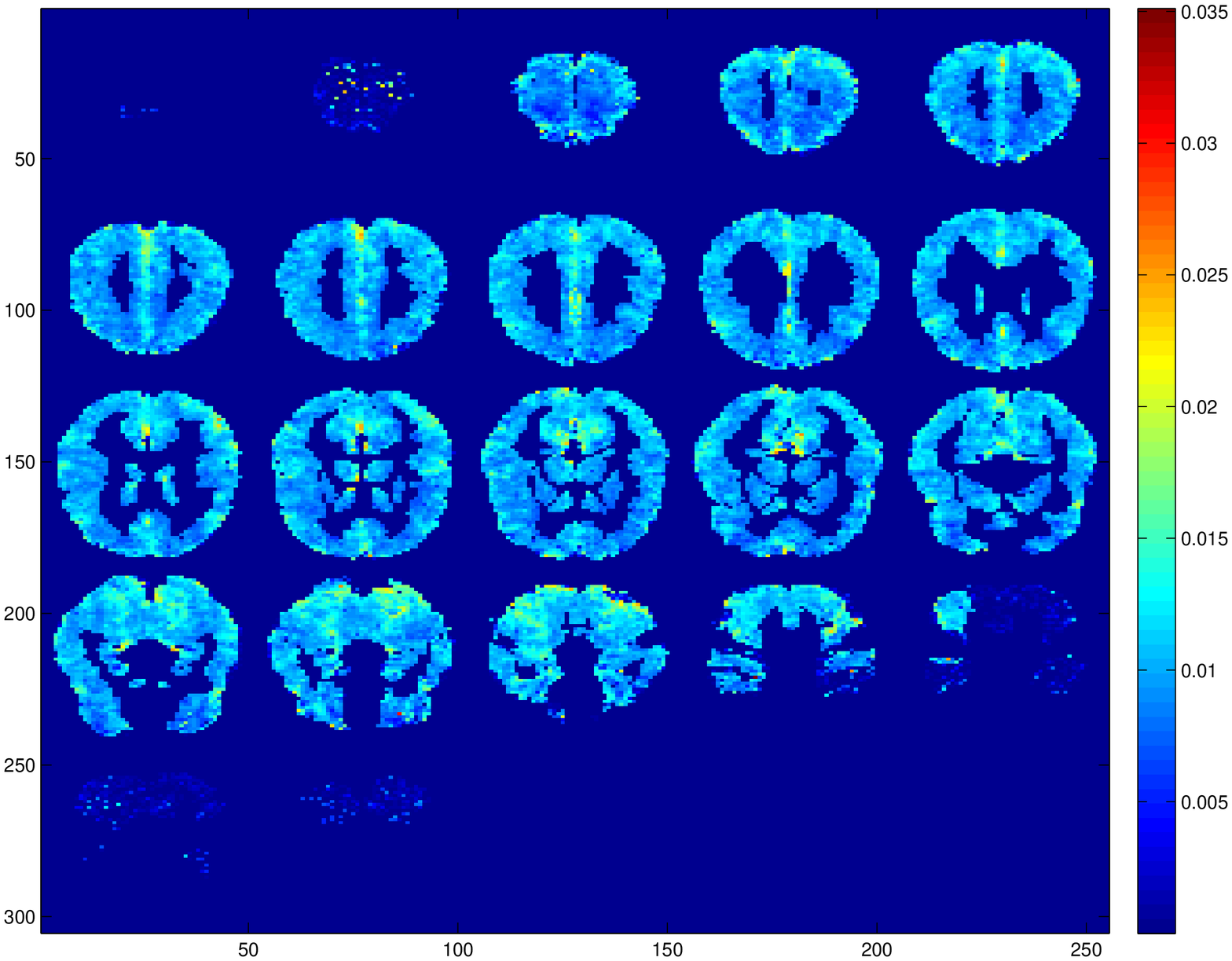}}
	\end{center}
	\caption{The latent brain images for the 4 word/question groups as shown in Table \ref{tab:fmri}. We can see that for each different group, the activation pattern of certain brain regions is different. For instance, Group 3 refers to small items that can be held in one hand,such as a tomato or a glass, and the activation pattern is very different from the one of Group 1, which mostly refers to insects, such as bee or beetle. Additionally, Group 3, for instance, shows high activation in the \emph{premotor cortex} which is associated with the concepts of that group.}
	\label{fig:fmri_results}
\end{figure*}

 \begin{table}[htpf]
  \begin{center}
{
  \begin{tabular}{lll}
     &\textbf{Nouns} & \textbf{Questions} \\ \hline
	\multirow{4}{*}{Group1} & beetle & can it cause you pain? \\
 & pants & do you see it daily? \\
 & bee & is it conscious? \\ \hline
	\multirow{4}{*}{Group 2} &bear & does it grow?\\
		&cow	& is it alive?	\\
		&coat	&	was it ever alive?\\ \hline
	\multirow{4}{*}{Group 3} &glass	& can you pick it up?	\\
	&tomato	& 	can you hold it in one hand?\\
	&bell	&	is it smaller than a golfball?'\\ \hline
	\multirow{4}{*}{Group 4}&	bed & does it use electricity?	\\
	&	house &	can you sit on it?\\
	&	car &	does it cast a shadow? \\ \hline
  \end{tabular}
  }
  \end{center}
  \caption{Groups of nouns and questions that are both positively and negatively correlated, and correspond to the brain regions show in Fig. \ref{fig:fmri_results}.}
  \label{tab:fmri}
\end{table}

\noindent{\bf By-product: Predicting Brain Activity from Questions}

In addition to soft-clustering, the low rank joint decomposition of the \brain data offers another significant result. This low dimensional embedding of the data into a common semantic space, enables the prediction of, say, the brain activity of a subject, for a given word, given the corresponding vector of question answers for that word. In particular, by projecting the question answer vector to the latent semantic space and then expanding it to the brain voxel space, we obtain a fairly good prediction of the brain activity.

To evaluate the accuracy of these predictions of brain activity, we follow a \emph{leave-two-out} scheme, where we remove two words entirely from the brain tensor and the question matrix; we carry out the joint decomposition, in some very low dimension, for the remaining set of words and we obtain the usual set of matrices $\mathbf{A,B,C,D}$. Due to the randomized nature of \method, we did 100 repetitions of the procedure described below.

Let $\mathbf{q}_{i}$ be the question vector for some word $i$, and $\mathbf{v}_i$ be the brain activity of one human subject, pertaining to the same word. By left-multiplying $\mathbf{q}_i$ with $\mathbf{D}^T$, we project $\mathbf{q}_i$ to the latent space of the decomposition; then, by left-multiplying the result with $\mathbf{B}$, we project the result to the brain voxel space. Thus, our estimated (predicted) brain activity is obtained as
$\mathbf{\hat{v}}_i = \mathbf{B D}^T \mathbf{q}_i$

Given the predicted brain activities $\mathbf{\hat{v}_1}$ and $\mathbf{\hat{v}_2}$ for the two left out words, and the two actual brain images $\mathbf{v_1}$ and $\mathbf{v_2}$ which were withheld from the training data, the \emph{leave-two-out} scheme measures prediction accuracy by the ability to choose which of the observed brain images corresponds to which of the two words.  After mean-centering the vectors, this classification decision is made according to the following rule:
\[
	\| \mathbf{v}_1 - \mathbf{\hat{v}}_1 \|_2 + \| \mathbf{v}_2 - \mathbf{\hat{v}}_2 \|_2 < \| \mathbf{v}_1 - \mathbf{\hat{v}}_2 \|_2 + \| \mathbf{v}_2 - \mathbf{\hat{v}}_1 \|_2
\]

Although our approach is
not designed to make predictions, preliminary results are very encouraging: Using only $F$=2 components, for the noun pair \emph{closet/watch} we obtained mean accuracy of about 0.82 for 5 out of the 9 human subjects. Similarly, for the pair \emph{knife/beetle}, we achieved accuracy of about 0.8 for a somewhat different group of 5 subjects. For the rest of the human subjects, the accuracy is considerably lower, however, it may be the case that brain activity predictability varies between subjects, a fact that requires further investigation.

We plan detailed experiments to determine the accuracy of these predictions
compared to specialized methods that have previously been used for
these predictions, but which do not have the ability of our method to
discover latent representations, such as \cite{palatucci2009zero}.


\hide{
\begin{figure}[htbf]
	\begin{center}
		\includegraphics[width = 0.45\textwidth]{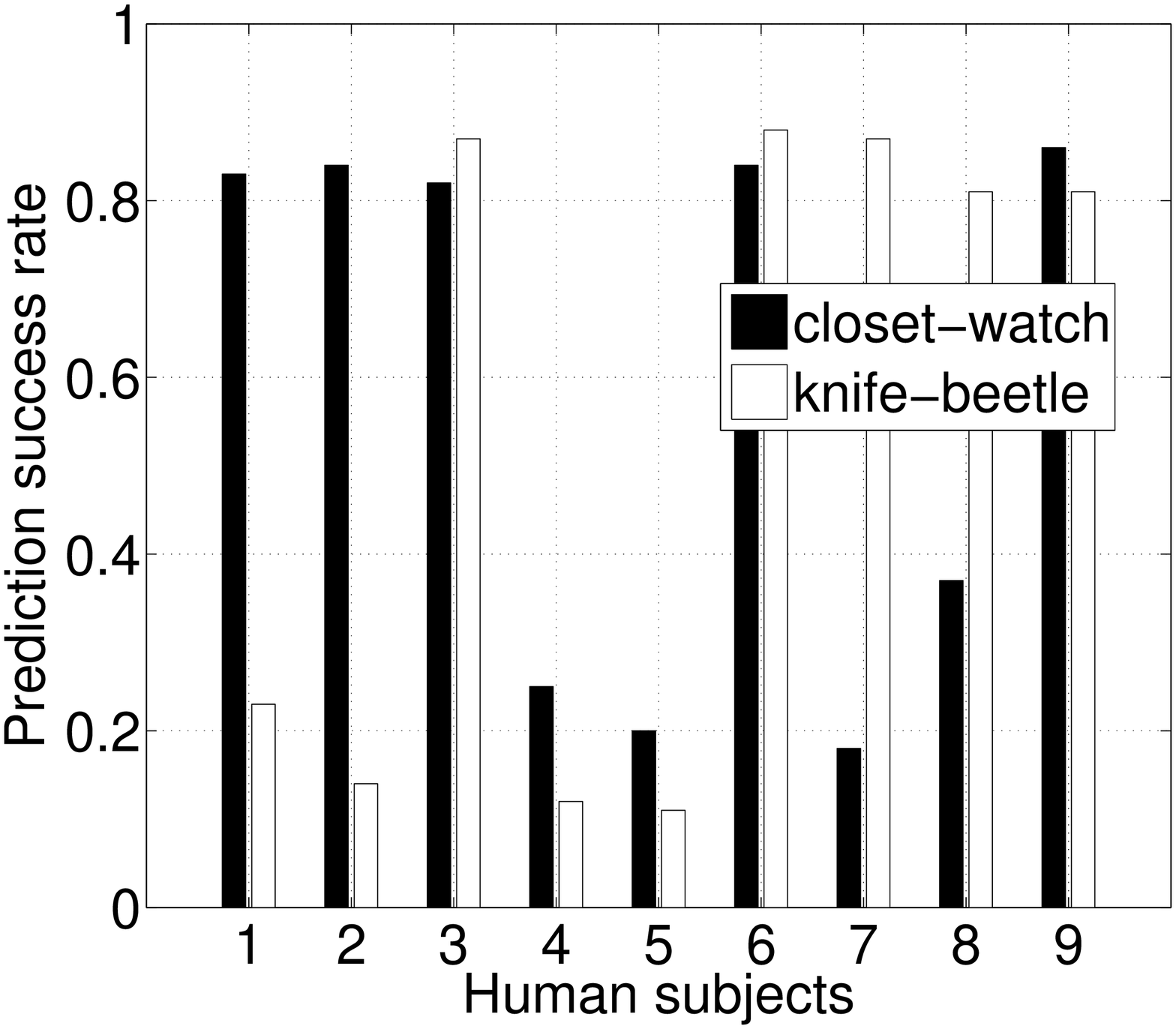}
		\caption{Prediction accuracy for two pairs of words, using \method and a leave-two-words-our cross-validation framework. The accuracy is computed per human subject basis; these results are on par with state of the art techniques, albeit our method was not specifically designed for this task, but merely offers it as a by-product.}
		\label{fig:classification}
	\end{center}
\end{figure}
}


\subsection{Generality: Mining Social Networks with Additional Information}
We have demonstrated the expressive power of \method for the \brain dataset, but in this subsection, we stress the fact that the method is actually application independent and may be used in vastly different scenarios. To that end, we analyze a \facebook dataset, introduced in \cite{viswanath-2009-activity}\footnote{Download \facebook at \url{http://socialnetworks.mpi-sws.org/data-wosn2009.html}}. This dataset consists of a $63890\times 63890\times 1847$ (wall, poster, day) tensor with about 740.000 non-zeros, and a $63890\times 63890$ who is friends with whom matrix, with about 1.6 million non-zeros. In contrast to \brain, this dataset is very sparse (as one would expect from a social network dataset). However, \method works in both cases, demonstrating that it can analyze data efficiently, regardless of their density.

We decomposed the data into 25 rank one components, using  $s_I=1000,~ s_J=1000, s_K =100$ and $s_I$ for both dimensions of the matrix, and manually inspected the results. A fair amount of components captured normal activity of Facebook users who occasionally post on their friends' walls; here we only show one outstanding anomaly, due to lack of space: In Fig. \ref{fig:fb_spammer} we show what appears to be a spammer, i.e. a person who, only on a certain day, posts on many different friends' walls: the first subfigure corresponds to the wall owners, the second subfigure corresponds to the people who post on these walls, and the third subfigure is the time (measured in days); we thus have one person, posting on many peoples' walls, on a single day.

\begin{figure}[htbf]
	\begin{center}
		\includegraphics[width = 0.45\textwidth]{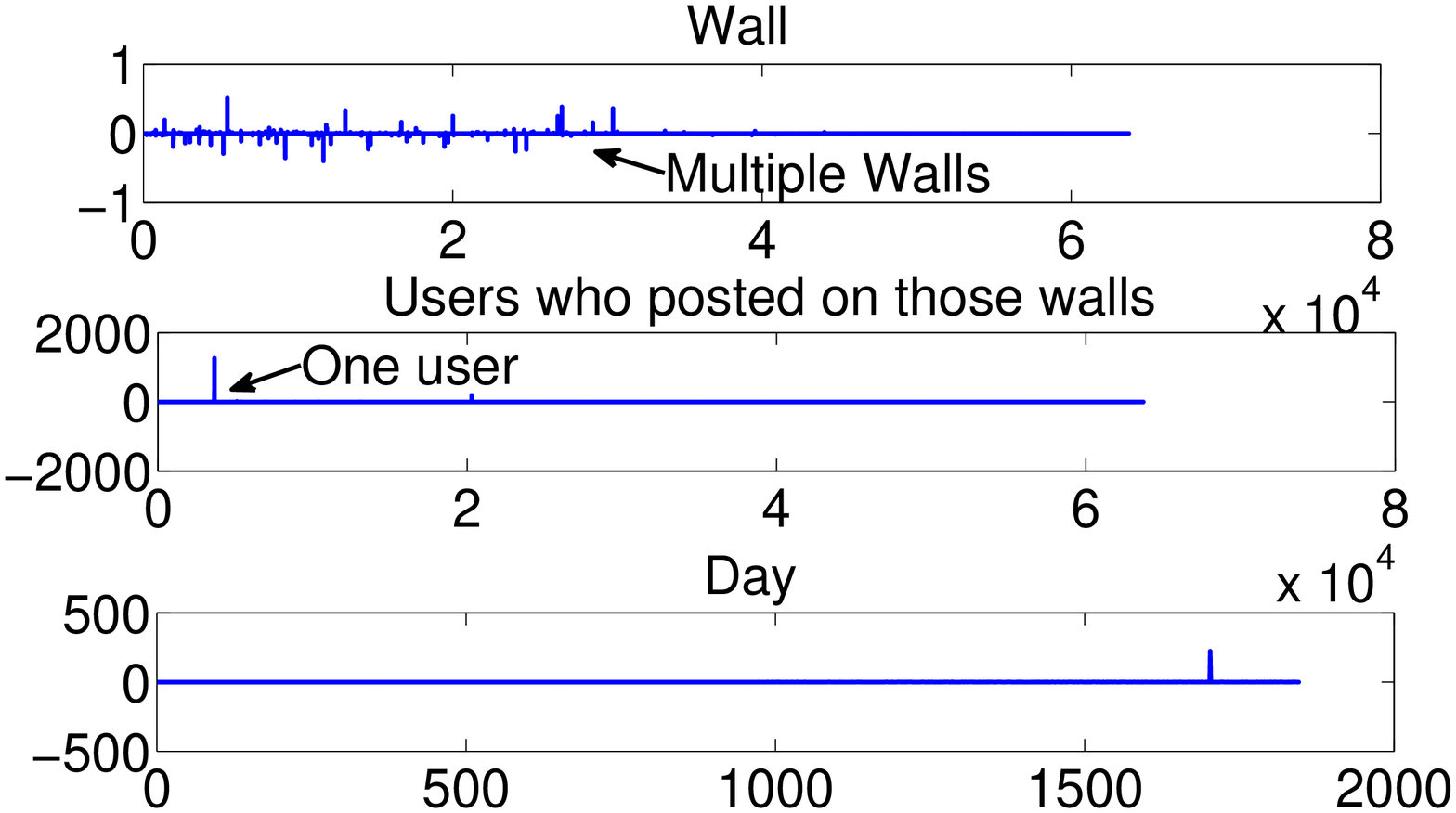}
		\vspace{-0.28in}
		\caption{This is a pattern extracted using \method, which shows what appears to be a spammer on the \facebook dataset: One person, posting to many different walls on a single day. }
		\label{fig:fb_spammer}
	\end{center}
\end{figure}

\section{Experiments}
\label{sec:exp}
We implemented \method in Matlab. For the parallelization of the algorithm, we used Matlab's Parallel Computing Toolbox. For tensor manipulation, we used the Tensor Toolbox for Matlab \cite{tensortoolbox} which is optimized especially for sparse tensors (but works very well for dense ones too). All experiments were carried out on a machine with 2 dual-core AMD Opteron 880 processors (2.4 GHz), 4 TB disk, and 48GB ram.  The parallel experiments were run on all 4 cores, which justifies our choice of  $r=4$ in this case. Whenever we conducted multiple iterations of an experiment (due to the randomized nature of \method), we report error-bars along the plots. For all the following experiments we used either portions of the \brain dataset, or the whole dataset.

\subsection{Accuracy}
In Figure \ref{fig:errors} we demonstrate that the algorithm operates correctly, in the sense that it reduces the model cost (Equation \ref{eq:main_obj}) when doing more repetitions. In particular, the vertical axis displays the relative cost, i.e.
$
	\frac{\text{\method cost}}{\text{ALS cost}}
$
(with ideal being equal to 1) and the horizontal axis is the number of repetitions in the sampling.
We observed that for a few executions of the algorithm, the cost was not monotonically decreasing; however, we 
ran the algorithm 1000 times, keeping the executions that decreased the relative cost monotonically and plotted them in Fig. \ref{fig:errors}. 

\begin{figure}[htb]
	\begin{center}
	
		\includegraphics[width = 0.5\textwidth]{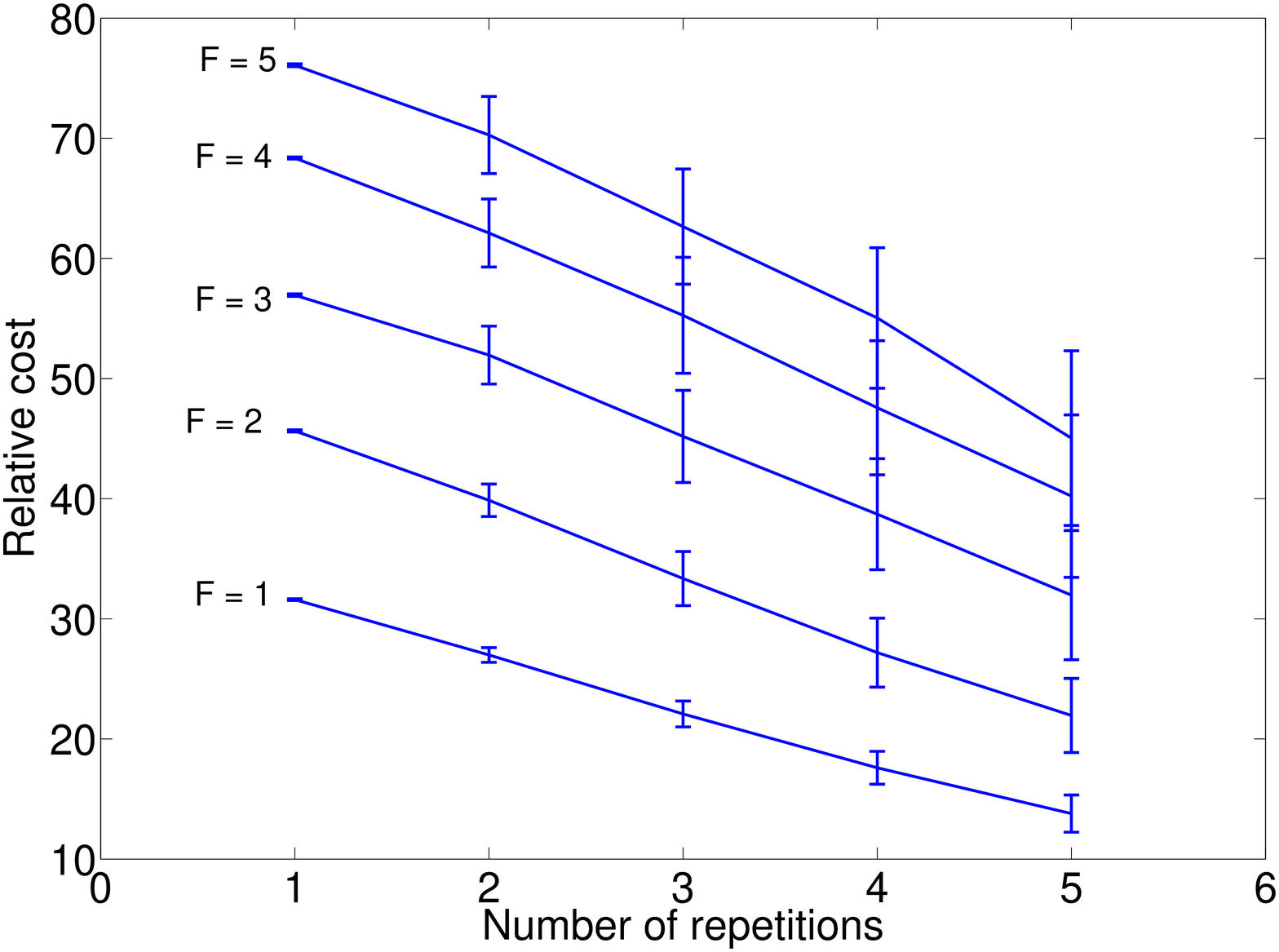}
		\caption{The relative cost of the model, as a function of the number of repetitions $r$ is decreasing, which empirically shows that \method actually reduces the approximation error of the CMTF model.}
		\label{fig:errors}
	\end{center}
\end{figure}

\subsection{Speedup}
As we have already discussed in the Introduction, \method achieves a speedup of 50-100 on the \brain; for the 50$\times$ case, the same approximation error of the CMTF objective is maintained, while for higher speedup values, the relative cost increases, but within reasonable range. Figure \ref{fig:crown} illustrates this behaviour.

Additionally, \method benefits greatly from it's inherent parallelizability. The parallel results we report come from $r=4$ repetitions of sampling, carried out on 4 cores; had more cores been available, we would probably observe a higher speedup (keeping of course Amdahl's law in mind), while maintaining low relative cost, since we establish in the previous subsection that the more repetitions we do, the better we approximate the CMTF model.

\hide{
In particular, on the real dataset analyzed in the next section, we observed a speedup in the order of 100, compared to the ALS algorithm. Moreover, in Fig. \ref{fig:speedup} we present the results of a more systematic study conducted on randomly generated matrix-tensor pairs (in the same way as in Fig. \ref{fig:sparsity}). More specifically, Fig. \ref{fig:speedup} presents the speedup gains of \method over ALS as a function of the sampling factor, for varying values of the number of repetitions.
}
\hide{
\begin{figure}[htbf]
	\begin{center}
		\includegraphics[width = 0.5\textwidth]{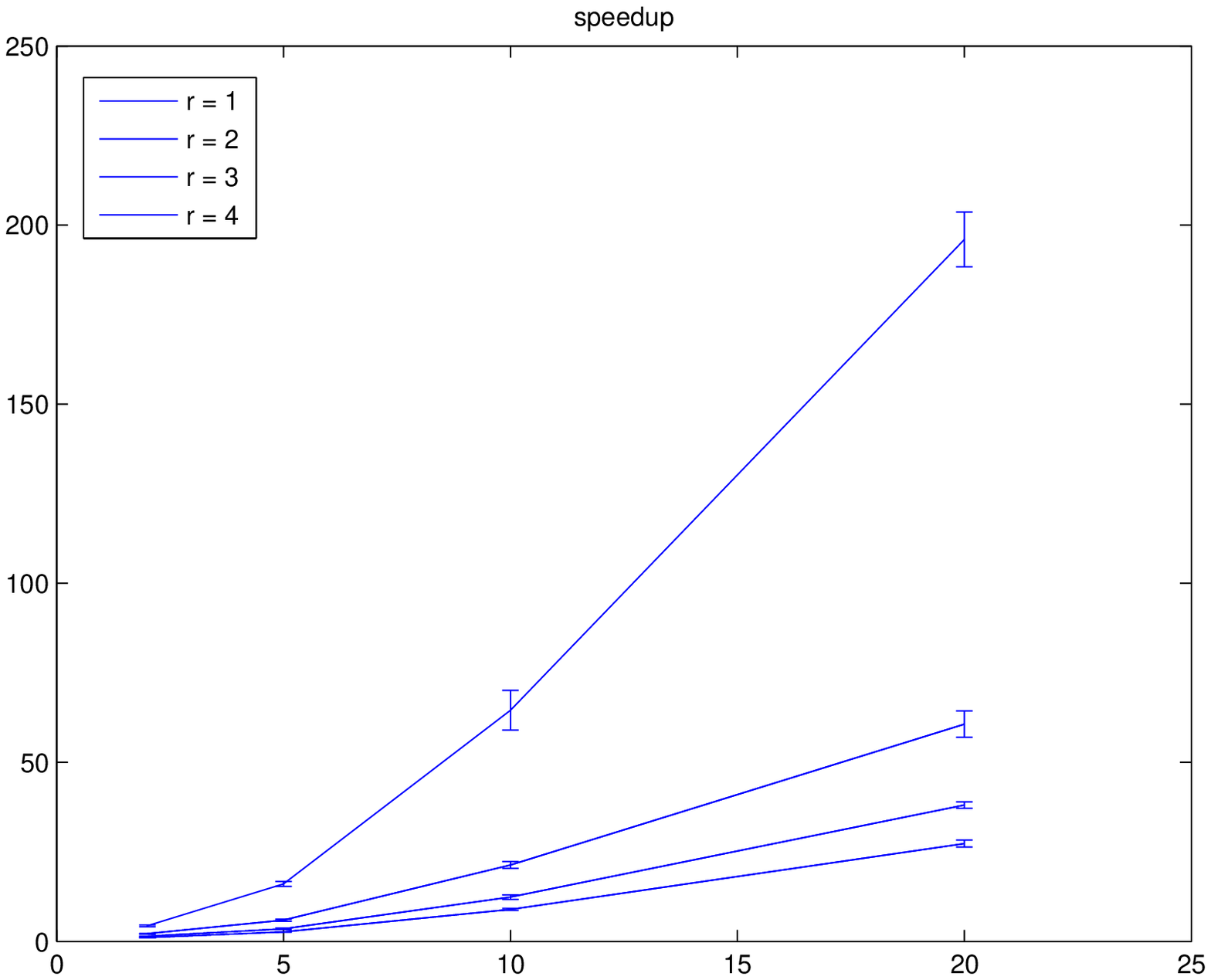}
		\caption{Speedup}
		\label{fig:speedup}
	\end{center}
\end{figure}
}
\hide{
Additionally, Table \ref{tab:pinv_speedup} provides a solid impression of the speedup achieved on the core ALS algorithm, as a result of the simplification of the pseudo-inversion step, as derived in Section \ref{sec:method}. In short, we can see that the speedup achieved is in most realistic cases 2x or higher, adding up to being a significant improvement on the traditional algorithm.

 \begin{table*}[!htb]
  \begin{center}
{ 
  \begin{tabular}{|c|c|c|c|c|c|c|}
  \hline
   $R$ &   $I = 10$ & $I=100$ & $I = 1000$ & $I = 10000$ & $I=100000$ \\ \hline
   $1$ &2.4686 $\pm$ 0.3304  & 2.4682 $\pm$0.3560	&	2.4479 $\pm$ 0.2948 & 2.4546 $\pm$ 0.3214  & 2.4345 $\pm$  0.3144\\ \hline
   $5$ & 2.2496 $\pm$ 0.3134  & 2.2937 $\pm$ 0.1291	&2.2935 $\pm$ 0.1295	 & 2.2953 $\pm$ 0.1291 & 2.2975 $\pm$ 0.1318\\ \hline
   $10$ & 2.6614 $\pm$ 0.1346 & 2.6616 $\pm$ 0.1368 	&	 2.6610 $\pm$ 0.1380 &  2.6591 $\pm$ 0.1377 & 2.6593 $\pm$ 0.1428 \\ \hline
	\vspace{-0.3in}		
  \end{tabular}
  }
  \end{center}
  \caption{Pseudoinversion speedup (100000 runs)}
  \label{tab:pinv_speedup}
\end{table*}

}

\subsection{Sparsity}
One of the main advantages of \method is that, \emph{by construction}, it produces \emph{sparse} latent factors for coupled matrix-tensor model. 
In Fig. \ref{fig:sparsity} we demonstrate the sparsity of \method 's results by introducing the relative sparsity metric; this intuitive metric is simply the ratio of the output size of the ALS algorithm, divided by the output size of \method. The output size is simply calculated by adding up the number of non-zero entries for all factor matrices output by the algorithm. We use a portion of the \brain dataset in order to execute this experiment. We can see that for the dense \brain dataset, we obtained twice as sparse results. However, in experiments with randomly generated, sparse, data, we experienced higher degrees of sparsity, in the order of 5$\times$. We omit such plots due to space constraints.


\begin{figure}[htbf]
	\begin{center}
		\includegraphics[width = 0.5\textwidth]{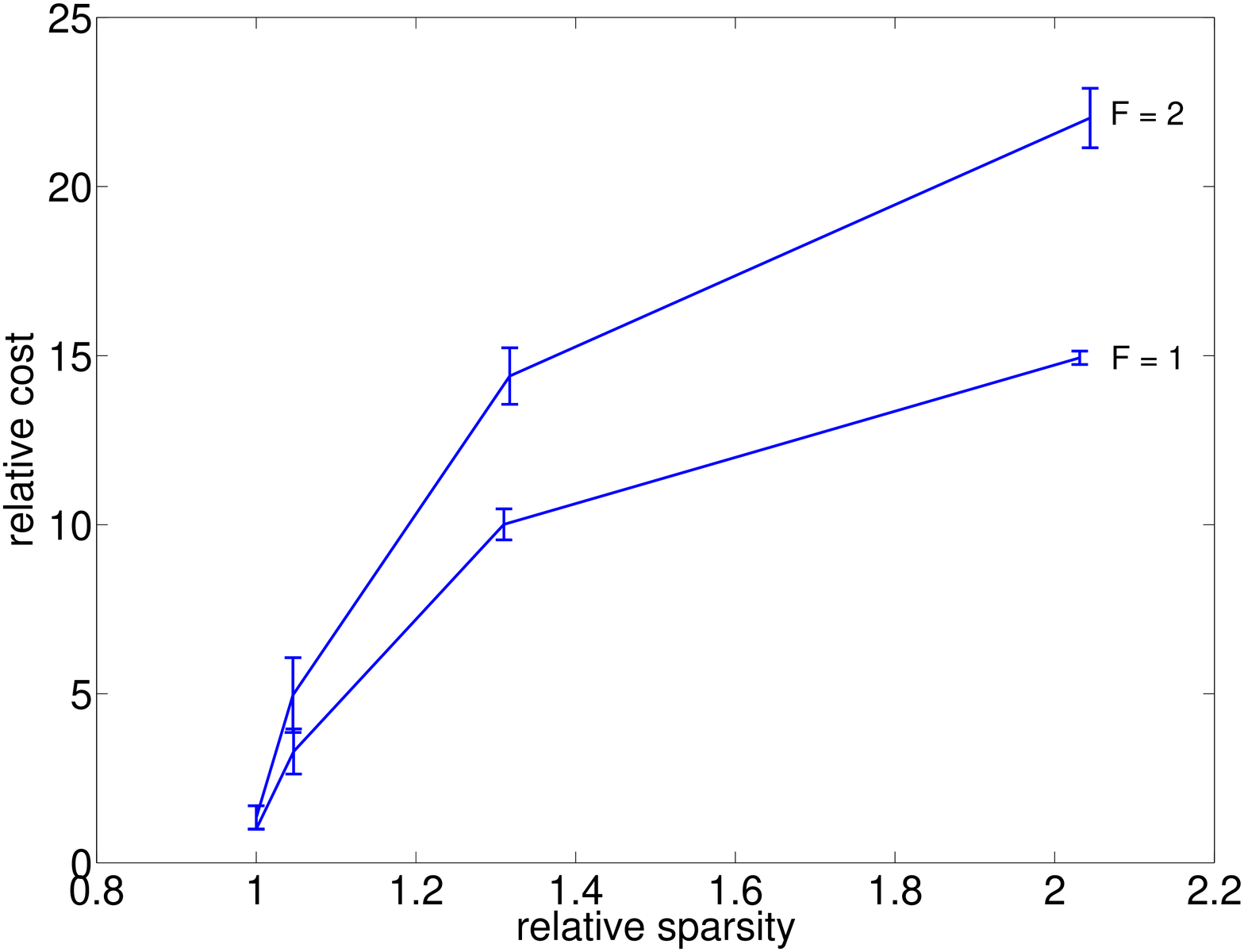}
			\end{center}
		\caption{The relative output size vs. the relative cost indicates that, even for very dense datasets such as \brain, we are able to get a 2 fold decrease in the output size, while maintaining good approximation cost.}
		\label{fig:sparsity}
\end{figure}

\subsection{Robustness to missing values}
In order to measure resilience to missing values we define the \emph{Signal-to-Noise Ratio} (SNR) as simply as $\text{SNR} = \frac{\| \tensor{X}_m\|_F^2}{ \| \tensor{X}_m - \tensor{X}_0  \|_F^2}$, where $\tensor{X}_m$ is the reconstructed tensor when a $m$ fraction of the values are missing.
In Figure \ref{fig:mv}, we demonstrate the results of that experiment; we observe that even for a fair amount of missing data, the algorithm performs reasonably well, achieving high SNR. Moreover, for small amounts of missing data, the speed of the algorithm is not degraded, while for larger values, it is considerably slower, probably due to Matlab's implementation issues. However, this is encouraging, in the sense that if the amount of missing data is not overwhelming, \method is able to deliver a very good approximation of the latent subspace. This experiment was, again, conducted on a portion of \brain.

\begin{figure}[htp]
	\begin{center}
		\includegraphics[width = 0.5\textwidth]{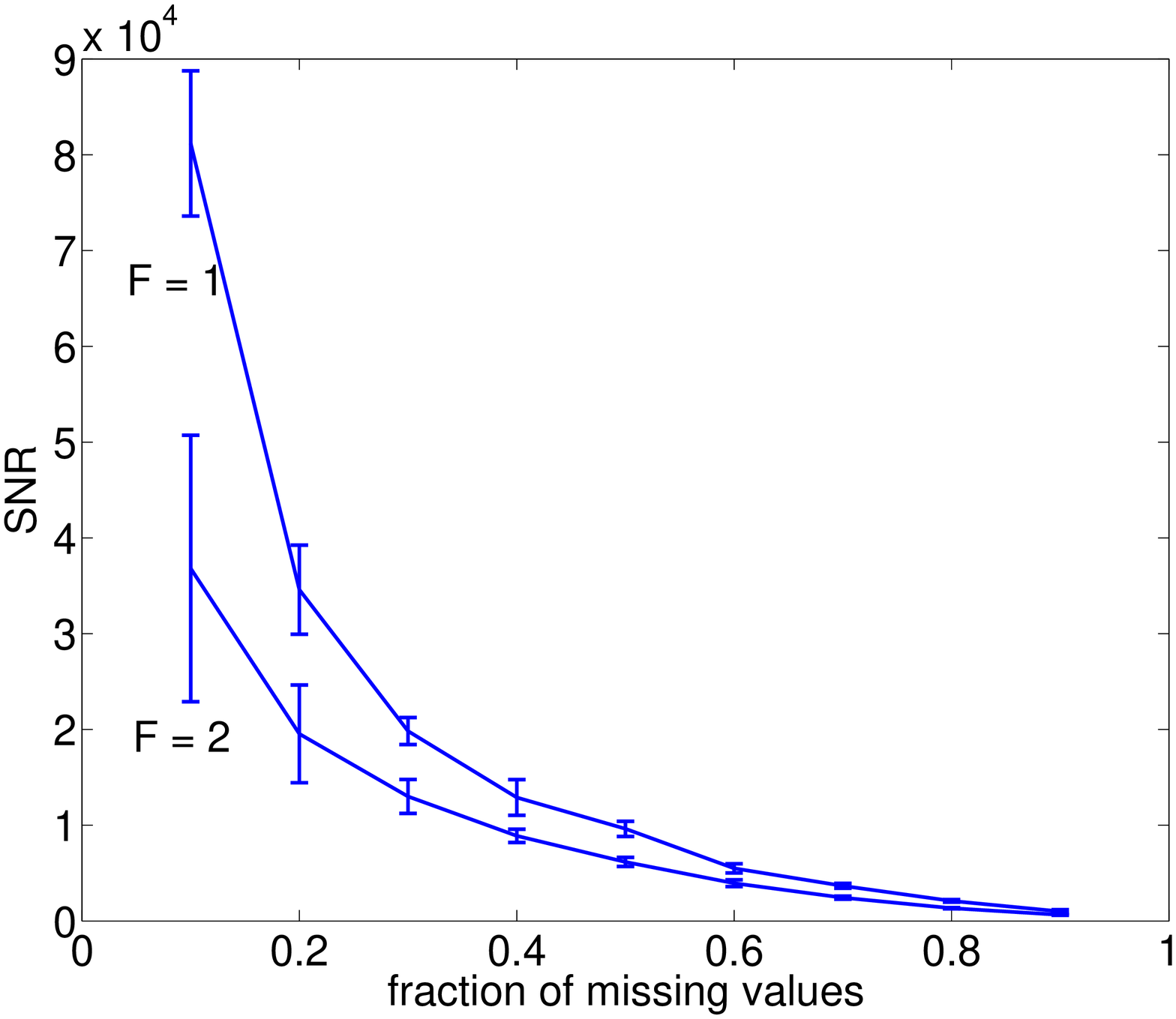}
		\caption{This Figure shows the Signal-to-Noise ratio (SNR)-as defined in the main text- as a function of the percentage of missing values. We can observe that, even for a fair amount of missing values, the SNR is quite high, signifying that \method is able to handle such ill-conditioned settings, with reasonable fidelity.}
		\label{fig:mv}
			\end{center}

\end{figure}

\section{Related Work}
\label{sec:related}
\noindent{\bf Coupled, Multi-block, Multi-set Models}
Coupled Matrix-Tensor Factorizations belong to a family of models also referred to as \emph{Multi-block} or \emph{Multi-set} in the literature.
Smilde et al. in \cite{smilde2000multiway} provided the first disciplined treatment of such multi-block models, in a chemometric context. An important issue with these models is how to weigh the different data blocks such that scaling differences may be alleviated. In \cite{wilderjans2009simultaneous}, Wilderjans et al. propose and compare two different weighing schemes. Most related to the present work is the work of Acar et al. in \cite{acar2011all}, where a first order optimization approach is proposed, in order to solve the CMTF problem. As we mention in the Introduction, \method is compatible with this algorithm, since it provides an alternative to the core CMTF solver. In \cite{acar2012coupled}, Acar et. al apply the CMTF model, using the aforementioned first-order approach in a chemometrics setting.
In \cite{acar2012coupledMatrix}, Acar et. al introduce a coupled matrix decomposition, where two matrices match on one of the two dimensions, and are decomposed in the same spirit as in CMTF, while imposing explicit sparsity constraints (via $\ell_1$ norm penalties); although \method also produces sparse factors, this so happens as a fortuitous byproduct of sampling, whereas in \cite{acar2012coupledMatrix} an explicit sparsity penalty is considered.
As an interesting application, in \cite{zheng2010collaborative}, the authors employ CMTF for Collaborative Filtering.
On a related note,  \cite{yokota2012linked}, \cite{lin2009metafac}, and \cite{liumining} introduce models where multiple tensors are coupled with respect to one mode, and analyzed jointly; in this work, we don't consider coupling of two (or more) tensors, however, we leave that for future work.

Having listed an outline of relevant approaches, to the best of our knowledge, \method is the first algorithm for CMTF that combines speed, parallelization, as well as sparse factors. An alternative perspective on \method is that of a framework that is able to speed up and sparsify any (possibly highly fine tuned) core algorithm for CMTF.

\noindent{\bf Treating Missing Values in Tensor Decompositions}
Tomasi et. al \cite{tomasi2005parafac} provides a very comprehensive study on how to handle missing values for plain tensor decompositions.

\noindent{\bf Fast \& Scalable Tensor Decompositions}
In \cite{papalexakis2012parcube} we introduced a parallel algorithm for the regular PARAFAC decomposition, where a sampling scheme of similar nature as here is exploited;  in \cite{kang2012gigatensor}, a scalable MapReduce implementation of PARAFAC is presented. Additionally, the mechanics behind the Tensor Toolbox for Matlab \cite{tensortoolbox} are very powerful when it comes to memory-resident tensors. Finally, in \cite{zhang2009parallel}, the authors introduce a parallel framework in order to handle tensor decompositions efficiently.

\noindent{\bf Tensor applications to brain data}
There has been substantial related work, which utilizes tensors for this purpose, e.g.  \cite{cichocki2009nonnegative}, \cite{acar2007multiway}.

\section{Conclusions}
\label{sec:concl}
Our main contributions are the following:

\begin{itemize}[noitemsep]
	\item \emph{Fast, parallel  \& sparsity promoting algorithm:} 
	\method is up to {\em 50-100 times} faster than state of the art algorithms.
	
	\item \emph{Robustness to missing data}: \method can effectively handle missing values, without significant performance degradation, even for moderate amounts of missing entries. 
	\item \emph{Effectiveness and Knowledge Discovery}: 
	\method, applied to the \brain dataset, 
	discovers meaningful triple-mode clusters: 
	clusters of words, 
	of questions, and of brain regions have similar behavior; as a by-product, \method is able to predict brain activity with very promising preliminary results.
\item \emph{Generality}:
	We applied \method to a \facebook dataset with additional information, 
	identifying what appears to be a spammer.

\end{itemize}

\hide{
\begin{itemize}[noitemsep]
	\item \emph{Fast and scalable algorithm}: We develop a fast and scalable algorithm which may operate in conjunction with any given core implementation of the CMTF decomposition (we choose the ALS algorithm, however, there exist other approaches as well, as outlined in Section \ref{sec:related}). Additionally, our algorithm scales well regardless of the data sparsity, since it does not rely on, e.g. high data sparsity.
	\item \emph{Latent factor sparsity}: The decomposition factors are significantly sparser than the ones produced by the ALS algorithm, while maintaining a good approximation of the original tensor.
	\item \emph{Robustness to missing values}:
	\item \emph{Parallelizable}:
	\item \emph{Simple to implement}:
\end{itemize}

In addition to algorithm design, we provide a structured way to represent a complex dataset and make interesting observations and discoveries.
}

\section*{Acknowledgements}
Research was funded by grants NSF IIS-1247489,  NSF IIS-1247632, NSF CDI 0835797, and NIH/NICHD 12165321.  Any opinions, findings, and conclusions or recommendations expressed in this
   material are those of the author(s) and do not necessarily reflect the views
   of the National Science Foundation, or other funding parties. The authors would also like to thank Leila Wehbe and Alona Fyshe for their initial help with the \brain data.

\bibliography{BIB/vagelis-ref}
\bibliographystyle{plain}

\balancecolumns


\end{document}